\setlist{noitemsep}
\newcommand{\E}[0]{\mathbb{E}}
\newcommand{\R}[0]{\mathbb{R}}
\newcommand{\lossp}[0]{\mathcal{L}}
\newcommand{\loss}[0]{\mathcal{L}_{\text{pref}}}
\newcommand{\D}[0]{\mathcal{D}_{\text{pref}}}
\newcommand{\setA}[0]{\mathcal{A}}
\newcommand{\xsel}[1][t]{g_{\text{sel}}}
\newcommand{\Hess}[0]{\mathbf {H}}
\newcommand{\Dess}[0]{\mathbf {D}}
\newcommand{\Wess}[0]{\mathbf {W}}
\newcommand{\Less}[0]{\mathbf {L}}
\setlist[itemize]{itemsep=1pt, topsep=1pt}
\theoremstyle{plain}
\newtheorem{theorem}{Theorem}[section]
\newtheorem{lemma}[theorem]{Lemma}
\theoremstyle{definition}
\newtheorem{assumption}[theorem]{Assumption}
\theoremstyle{remark}
\newtheorem{remark}[theorem]{Remark}
\title{Fusing Rewards and Preferences \\in Reinforcement Learning}
\author {
    Sadegh Khorasani\textsuperscript{\rm 1},
    Saber Salehkaleybar\textsuperscript{\rm 2},
    Negar Kiyavash\textsuperscript{\rm 3},
    Matthias Grossglauser\textsuperscript{\rm 1}.
}
\begin{document}

\maketitle

\begin{abstract}
We present Dual-Feedback Actor (DFA), 
a reinforcement learning algorithm that fuses both individual rewards and pairwise preferences (if available) into a single update rule.
DFA uses the policy’s log-probabilities directly to model the preference probability, avoiding a separate reward-modeling step. 
Preferences can be provided by human-annotators (at state-level or trajectory-level) or be synthesized online from Q-values stored in an off-policy replay buffer.
Under a Bradley–Terry model, we prove that minimizing DFA’s preference loss recovers the entropy-regularized Soft Actor-Critic (SAC) policy. 
Our simulation results show that DFA trained on generated preferences matches or exceeds SAC on six control environments and demonstrates a more stable training process.  
With only a semi-synthetic preference dataset under Bradley-Terry model, our algorithm outperforms reward-modeling reinforcement learning from human feedback (RLHF)  baselines in a stochastic GridWorld and approaches the performance of an oracle with true rewards. 
\end{abstract}

\section{Introduction}
Over the past decade, Reinforcement Learning (RL) has achieved remarkable success across a wide range of applications, including video games \citep{knox2008tamer,warnell2018deep}, recommendation systems \citep{kohli2013fast,zeng2016online}, and autonomous driving \citep{kiran2021deep}. RL focuses on how agents make decisions while interacting with dynamic, changing environments. 
At each time step, an agent chooses an action based on its current state and receives a reward that indicates how good that action was. The goal is to learn a policy that maximizes the total reward accumulated over time. In traditional RL, the reward function is usually manually designed by experts to guide the agent's behavior toward desired outcomes. However, crafting such a function is a challenging and often ambiguous task \citep{ng2000algorithms}.

To overcome the limitations of hand-engineered rewards, Reinforcement Learning from Human Feedback (RLHF) has emerged as a compelling alternative, particularly in the fine-tuning of large language models (LLMs) \citep{christiano2017deep, stiennon2020learning, ouyang2022training}. RLHF bypasses manual reward specification by inferring a reward model from human preferences over trajectory pairs. This reward model then guides policy optimization using standard RL algorithms. Despite its empirical successes, RLHF methods relying on reward inference, face significant practical and theoretical challenges, including reward model misspecification, overfitting, distribution shift, and non-identifiability of reward functions \citep{zhu2024iterative,casper2023open}. Moreover, the reward inference step introduces additional complexity and often requires large volumes of annotated data. 

To simplify the pipeline and avoid reward inference, in the context of language modeling, Direct Preference Optimization (DPO) has recently been proposed as a direct approach to exploit human preferences \citep{rafailov2023direct}. Thanks to a closed-form expression of the optimal policy under a Bradley–Terry preference model, DPO avoids estimating the reward function. Although DPO has shown promising results in fine-tuning large language models, its loss formulation tends to induce deterministic policies and is susceptible to mode collapse \citep{azar2024general,sharifnassab2024soft}. 
Moreover, the existing theory for DPO only covers contextual bandits or MDPs with deterministic transitions \citep{rafailov2023direct,rafailov2024r}.  
As a result, directly applying DPO (or methods suggested in \citet{guo2024direct, xie2024exploratory}) in general reinforcement learning settings is suboptimal, where effective exploration is critical for policy improvement in stochastic MDPs \citep{zhang2024zeroth}. 
More recently, ZPG \citep{zhang2024zeroth} suggested an RLHF approach 
that does not rely on a reward model and is designed for non-deterministic MDPs. However, as the authors acknowledged, the algorithm lacks a strategic exploration mechanism. Furthermore, it relies on trajectory-level preference comparisons and performs on-policy updates, hence previously collected data are not reused. 

In this work, we introduce Dual-Feedback Actor (DFA), a reinforcement learning algorithm that works for stochastic MDPs and unifies scalar rewards and preference-based feedback into a single, principled policy update rule. Unlike many prior approaches in RLHF that infer a separate reward model from preferences, DFA directly incorporates preferences into the policy optimization objective using the policy’s log-probabilities and retains Soft Actor-Critic (SAC)-style entropy-driven exploration.  The main contributions are as follows:

\begin{itemize}

\item Our approach offers dual compatibility with both rewards and preferences. When numerical rewards are available, the agent updates its Q-networks and incorporates preference-based learning by synthesizing preferences from Q-values. This dual approach allows the agent to use reward signals while maintaining flexibility to incorporate human feedback, especially in settings where rewards are sparse or absent.

\item Our approach can be used not only in on-policy manner but also in off-policy manner, which enables more sample-efficient learning by reusing past experiences stored in a replay buffer. This is particularly valuable for hierarchical RL applications where sample efficiency is needed to train the policy of each layer.

\item Under the assumptions stated in Section~\ref{sec:theory}, we prove that minimizing DFA’s preference loss recovers the entropy-regularized SAC solution, formally bridging preference optimization and entropy-regularized RL. Consequently, DFA inherits SAC's entropy-driven exploration, maintaining diverse action sampling even when it learns solely from preferences.

\item Experimental results in Section~\ref{sec:experiments} show that DFA consistently matches or outperforms both reward-based and preference-based baselines on six control tasks and a stochastic GridWorld, while yielding a more stable training process.

\end{itemize}
\section{Related Work}
\label{sec:related-work}
There are two dominant paradigms for incorporating human feedback in reinforcement learning.
The first relies on reward modeling: These methods first fit a scalar reward (or value) prediction model from preference data and then treat this learned reward model as the surrogate reward for standard policy optimization.  This two-stage pipeline was introduced in \citep{christiano2017deep}, \citet{schoenauer2014programming}, and later scaled to large language models by \citet{ziegler2019fine}, \citet{stiennon2020learning}, and \citet{ouyang2022training}. 
The second relies on direct policy optimization: These algorithms bypass an explicit reward model and update the policy parameters solely from preference comparisons \citep{wilson2012bayesian, busa2014preference,akrour2011preference}. 

For reward-modeling approaches, several works \citep{saha2023dueling,zhu2023principled,wu2023making} consider linearly parameterized reward models and characterize the error bounds of the estimated parameters, and prove that subsequent reward-based RL can tolerate small errors in rewards. \citet{zhan2023provable} extend this analysis to more general reward function classes under some conditions. These analyses have been extended to direct policy optimization approaches in \citet{xu2020preference,chen2022human, zhang2024reinforcement}. 

In the context of language modeling, DPO \citep{rafailov2023direct} provides a direct approach to aligning language models with human preferences by optimizing a policy to maximize the likelihood of preferred responses over nonpreferred ones, eliminating the need for an explicit reward model. SPO \citep{sharifnassab2024soft} optimizes model output directly over a preference dataset through the natural conditional probability of the preferred responses over nonpreferred ones. 
{ Similar approaches have also been explored in this literature \citep{xu2024contrastive,ethayarajh2024kto, hong2024orpo,park2024disentangling, hong2024orpo, meng2024simpo,li2025length}}.
RLHF has also been studied in other aspects. 
For example, the framework in 
\citet{swamy2024minimaximalist} casts RLHF as a two-player zero-sum game. However, they still estimate the rewards (and subsequently apply PPO, TRPO, or SAC) based on a constantly updated queue of recent rollouts, which can cause data staleness issues.

Recent work, \citet{xie2024exploratory}, inspired by DPO, combines DPO with optimistic exploration to design XPO in the function approximation regime with provable convergence. 
ZPG \citep{zhang2024zeroth} aims to address RLHF without relying on a reward model and is designed for non-deterministic MDPs. However, it lacks an exploration mechanism, which is essential for general RL applications.
Although previous work brought advancement in several aspects, existing algorithms are rarely benchmarked (theoretically and experimentally) against strong reward-based baselines such as SAC. 

\section{Preliminaries}
\label{sec:preliminaries}

In this section, we introduce the notation for RL, RLHF, and review the DPO objective \citep{rafailov2023direct}. 
We model the environment as a finite–horizon Markov Decision Process (MDP).
An MDP can be represented as a tuple $\mathcal{M}=\langle\mathcal{S},\mathcal{A},P,R, \gamma,p_0\rangle$, where $\mathcal{S}$ and $\mathcal{A}$ are state space and action space, respectively. 
The conditional probability of transition from state $s$ to $s'$ with action $a$ is denoted by $P(s'|s,a)$. 
The probability distribution over the initial state $s_0$ is denoted by $p_0(s_0)$. 
The parameter $\gamma\in (0,1)$ denotes the discount factor.
At each time step $t$, $r(s_t,a_t)$ returns the reward of taking action $a_t$ in the state $s_t$. 
Actions are chosen according to the policy $\pi$ where $\pi(a|s)$ is the probability of taking action $a$ for a given state $s$. Here, we assume that the policy is parameterized with a vector $\theta\in \mathbb{R}^d$ and use shorthand notation $\pi_{\theta}$ for $\pi_{\theta}(a|s)$. For a given time horizon $H$, we define $\tau=(s_0,a_0,\cdots,s_{H-1},a_{H-1})$ as a sequence of state-action pairs called a trajectory. $R(\tau)$ is a function that returns the discounted accumulated reward of each trajectory as follows:
$R(\tau):=\sum_{h=0}^{H-1} \gamma^h r(s_h,a_h)$ where $\gamma\in (0,1)$ is the discount factor.

Given a policy $\pi$, the \emph{state-value function}
and the \emph{action-value function (or $Q$-function)} are
\[
V^{\pi}(s)
\;=\;
\mathbb{E}_{\tau\sim\pi}\!\Bigl[
        \sum_{t=0}^{H-1}\gamma^{t}\,r(s_t,a_t)
        \;\Big|\;s_0=s
\Bigr],
\]\[
Q^{\pi}(s,a)
\;=\;
\mathbb{E}_{\tau\sim\pi}\!\Bigl[
        \sum_{t=0}^{H-1}\gamma^{t}\,r(s_t,a_t)
        \;\Big|\;s_0=s,\;a_0=a
\Bigr].
\]

\paragraph{Classical RLHF feedback setting \citep{christiano2017deep}.}
Let $\mathcal{M}=\langle\mathcal{S},\mathcal{A},P,R,\gamma,p_0\rangle$ be the finite-horizon MDP where the true reward $r(s,a)$ is \emph{hidden}. 
Hence, we ask humans to compare trajectories and form the preference dataset 
\[
\D
=\bigl\{(\tau_k^{+},\tau_k^{-})\bigr\}_{k=1}^{K},
\qquad
\tau_k^{+}\succ\tau_k^{-}\;,
\]
where $K$ is the total number of pairs, and $\tau_k^+$ is \emph{preferred}
to $\tau_k^-$ which is denoted as $\tau_k^{+}\succ\tau_k^{-}$. 
We define a parametric function $r_\phi:\mathcal{S}\!\times\!\mathcal{A}\to\mathbb{R}$ to \emph{approximate} the latent reward.  
For any trajectory $\tau=(s_0,a_0,\dots,s_{H-1},a_{H-1})$, we define the model return as:
\[
R_\phi(\tau)=\sum_{h=0}^{H-1}\gamma^{h}\,r_\phi(s_h,a_h).
\]
The parameters~$\phi$ are learned by maximum likelihood under the Bradley–Terry model \citep{bradley1952rank}, which is equivalent to minimizing the following loss:
\[
\mathcal{L}(\phi)=
-\;\mathbb{E}_{(\tau^{+},\tau^{-})\sim\D}
\bigl[
\log \sigma\!\bigl(R_\phi(\tau^{+})-R_\phi(\tau^{-})\bigr)
\bigr],
\]\[
\sigma(z)=\frac{1}{1+e^{-z}}.
\]
Let 
$\hat r_\phi$ be the estimated reward function. Next, 
with $\hat r_\phi$ fixed, a policy-gradient method such as PPO \citep{schulman2017proximal} or SAC \citep{haarnoja2018soft} updates $\pi_\theta$ to maximize
\[
J(\theta)=
\mathbb{E}_{\tau\sim\pi_\theta}\!\bigl[\hat R_{\phi}(\tau)\bigr].
\]

A well-known drawback on this two-stage pipeline is its sensitivity to noise, and any overfitting in $\hat r_\phi$ propagates directly to the final policy updates \citep{casper2023open}.

\paragraph{DPO in language models \citep{rafailov2023direct}}
In language models, a \textit{state} is the text prefix
(or prompt) $x$, and an \textit{action} is the response $y$ produced
by the model (call it continuations).  
Annotators make a choice among two full continuations $(y^{+},y^{-})$ sampled from the
same prompt, giving the preference dataset
\[
\D
=\bigl\{(x_k,y_k^{+},y_k^{-})\bigr\}_{k=1}^{K},
\qquad
y_k^{+}\succ y_k^{-}.
\]
Let $\pi_{\text{ref}}$ be the frozen base model (e.g.\ a pre-trained
GPT checkpoint).  
For a prompt $x$ and two candidate continuations
$y^{+},y^{-}$, define the log-probability gap:
\[
\Delta_{x,y^{+},y^{-}}(\theta)=
\bigl[\log\pi_\theta(y^{+}\!\mid\!x)-\log\pi_\theta(y^{-}\!\mid\!x)\bigr]
-\]\[\qquad \qquad \qquad \bigl[\log\pi_{\text{ref}}(y^{+}\!\mid\!x)-\log\pi_{\text{ref}}(y^{-}\!\mid\!x)\bigr].
\]
$\Delta_{x,y^{+},y^{-}}(\theta)$ captures how much more the new model
prefers the chosen continuation over the rejected one, \emph{relative}
to the base model. DPO then minimizes
\[
\mathcal{L}_{\text{DPO}}(\theta)=
-\;\mathbb{E}_{(x,y^{+},y^{-})\sim\D}
\Bigl[
\log\sigma\!\bigl(\alpha\,\Delta_{x,y^{+},y^{-}}(\theta)\bigr)
\Bigr],
\]\[
\sigma(z)=\tfrac{1}{1+e^{-z}},\;\alpha>0.
\]
  
Minimizing $\mathcal{L}_{\text{DPO}}$ pushes the new model
toward the preferred continuation, while limiting it to the safe
behavior of $\pi_{\text{ref}}$.
The absence of a separate reward model in DPO removes a major source of overfitting or noisy evaluations of the reward modeling. 
However, DPO assumes a Bradley-Terry choice model to derive its loss function, and this loss tends to produce near-deterministic models. This reduced diversity makes DPO prone to mode collapse \citep{azar2024general}.


\section{Methods}
\label{sec:methodologies}

In this section, we introduce our \emph{Dual-Feedback Actor} (DFA).  In order
to describe the DFA algorithm, we first introduce the state-wise feedback setting as follows:

\paragraph{State-wise feedback.} 
In this setting, the annotator does \emph{not} compare full trajectories.  
Instead, at a given state $s_k$, the annotator sees two actions, marks the
winner $a_k^{+}$ over the loser $a_k^{-}$. Then, the following preference dataset is formed:
\[
\D\;=\;\bigl\{(s_k,a_k^{+},a_k^{-})\bigr\}_{k=1}^{K},
\qquad a_k^{+}\succ a_k^{-},
\]
where $a^+$ is \emph{preferred}
to $a^-$ at state $s_k$. In the subsections below, we first consider the case where the agent learns only from state-wise human
comparisons. Second, we show how to synthesize preferences from numerical rewards when they are available.  Finally, we extend DFA to trajectory-based comparisons. 
\subsection{Learning with Only State‐wise Preferences}
\label{sec:dfa-statewise}

Assume we have collected a set of preference comparisons  
\[
\D
\;=\;
\bigl\{(s_k,a_k^{+},a_k^{-})\bigr\}_{k=1}^{K},
\qquad
a_k^{+}\succ a_k^{-}\;.
\]
Unlike classical RLHF, \emph{we do not assume an underlying
Bradley–Terry reward model}. 
Instead, we rely on the policy’s log-probabilities to model the preference probability directly.
For any pair $(s,a^{+},a^{-})$ we define the \emph{preference
probability} produced by the current policy $\pi_\theta$ as
\begin{equation}
\label{eq:cond-prob}
P_\theta\!\bigl(a^{+}\succ a^{-}\mid s\bigr)
=\frac{\pi_\theta(a^{+}\!\mid\!s)^{\alpha}}
       {\pi_\theta(a^{+}\!\mid\!s)^{\alpha}
        +\pi_\theta(a^{-}\!\mid\!s)^{\alpha}},
\qquad
\alpha>0.
\end{equation}
The exponent $\alpha$ controls the uncertainty assigned to the policy's output:
$\alpha\!\rightarrow\!0$ yields a nearly uniform (high‐entropy) choice,
while $\alpha\!\rightarrow\!\infty$ approaches a hard winner–takes–all
rule.

The negative log-likelihood \eqref{eq:cond-prob} gives the state-wise preference loss:
\begin{equation}
\mathcal{L}_{\text{pref}}(\theta)
\;=\;
-\;
\E_{(s,a^{+},a^{-})\sim\D}
\!\left[
\log
P_\theta\!\bigl(a^{+}\succ a^{-}\mid s\bigr)
\right].
    \label{eq:pref-loss}
\end{equation}
Minimizing $\mathcal{L}_{\text{pref}}$ directly increases the
probability that $\pi_\theta$ selects the human-preferred action,
without introducing auxiliary reward networks or relying on any latent
utility assumptions \footnote{%
Eq.~\eqref{eq:pref-loss} is identical 
to the \emph{preference loss} \(\mathcal{L}^{\alpha}_{\mathrm{pref}}\) used in Soft
Preference Optimization (SPO) \citep{sharifnassab2024soft}. Although DFA adopts the same logistic pairwise-loss form, the similarity ends there.
In SPO, the same term is combined with a global KL regularizer
\(\mathrm{D}_{\text{KL}}(\pi_\theta\!\parallel\!\pi_{\text{ref}})\),
whereas here we study the stand-alone preference part and show that,
under some assumptions, it  aligns the policy with the
entropy–regularized RL solution (Theorem \ref{thm:pref-sat}). 
Moreover,  SPO is in the context of LLMs and is designed for an offline setting. DFA targets stochastic MDPs, supports off-policy replay, preserves SAC-style entropy exploration with theoretical analysis, and unifies numeric rewards with preferences. Synthesizing preferences, as explained in Section \ref{sec:synth-pref-rewards}, is another key innovation in DFA that allows for online settings in RL.}.
Note that \eqref{eq:pref-loss} can be reformulated as follows:

\begin{align}
\mathcal{L}_{\text{pref}}(\theta) =\;
    & -\E_{( s,a^{+},a^{-})\sim \D} \Big[ 
        \log \sigma\Big( \alpha \big( \log \pi_\theta(a^+|{s}) \notag \\
    & \qquad\qquad\qquad\qquad - \log \pi_\theta(a^-|{s}) \big) \Big) 
    \Big] \notag
\end{align}

 In simulated environments or settings where numerical rewards are accessible, it is possible to synthesize preference data from these rewards or their proxies, such as Q-values. Our approach, introduced in the next section, is particularly useful when integrating preference-based learning into an agent's training loop, even when direct human feedback is unavailable or insufficient. Our method fuses numerical rewards and preference data by synthesizing preferences from numerical rewards.

\subsection{Synthesizing Preferences from Numerical Rewards}
\label{sec:synth-pref-rewards}

We use Q-values as a proxy to create preference pairs, enabling online preference generation during policy updates without explicitly constructing full trajectory segments. Estimating Q-values can be done through any method in the literature, and is particularly relevant in off-policy methods such as SAC, where a replay buffer stores past experiences as tuples $(s_t, a_t, r_t)$, where $s_t, a_t,$ and $r_t$ are state, action and reward at time $t$, respectively. 

Our approach works as follows:
For a batch of states $\{s_i\}_{i=1}^N$ sampled from the replay buffer, we generate two candidate actions to form preference pairs:
The first action, denoted by $a_i$, corresponds to the action originally taken in state $s_i$ as stored in the replay buffer. 
This action reflects the historical behavior of the agent at the time the state was visited. 
The second action, denoted by $a'_i$, is obtained from the replay buffer by identifying the action associated with the nearest state to $s_i$ (denote it with $s'_i$)\footnote{In our experiments, we compute the Euclidean distance between $s_i$ and all states in the buffer, select the closest state $s'_i$, and retrieve its corresponding action $a'_i$.}. 
     For both actions, we compute their respective Q-values.
     The action with the higher Q-value is designated as the preferred action $a_i^+$, while the other is labeled as the rejected action $a_i^-$:
    \[
    \text{If } Q(s_i, a_i) > Q(s_i, a'_i), \qquad \text{ then } (a_i^+, a_i^-) = (a_i, a'_i), \qquad \]
    \[\text{else } (a_i^+, a_i^-) = (a'_i, a_i).
    \]

This process effectively synthesizes preference data in the form of state-action pairs $\D^{\text{Syn}}=\{({s}_i, a_i^+, a_i^-)\}_{i=1}^N$ for each batch. The loss in \eqref{eq:pref-loss}, is then calculated over the states $\{{s}_i\}_{i=1}^N$ and using their associated preferred and rejected actions. Specifically, the loss encourages the policy to assign higher probability to preferred actions over rejected ones, scaled by the parameter $\alpha$

\begin{align}
\loss^{\text{Syn}}(\theta) = 
    & -\E_{( s_i,a_i^{+},a_i^{-})\sim \D^{\text{Syn}}} \Big[ 
    \log \Big( \sigma\big( \alpha \big( \log \pi_\theta(a_i^+|{s}_i) \notag \\
    & \qquad - \log \pi_\theta(a_i^-|{s}_i) \big) \big) \Big) 
    \Big]\notag
\end{align}
where $\sigma(\cdot)$ is the sigmoid function. Figure \ref{fig:dfa_workflow} gives a high-level schematic of our methodology.

\begin{figure}[htp!]
  \centering
  \resizebox{0.99\linewidth}{!}{%
  \begin{tikzpicture}[
      font=\small, >=latex,
      box/.style  ={draw, rounded corners=4pt, minimum width=3.7cm,
                    minimum height=1.7cm, text width=3.5cm,
                    align=center, font=\scshape},
      agent/.style ={box, fill=blue!10},
      buffer/.style={box, fill=orange!12},
      synth/.style ={box, fill=green!12},
      policy/.style={box, fill=purple!10},
      arrow/.style ={->, thick}
    ]

    \node[agent]  (agent)            {Agent\\$\pi_\theta$};
    \node[buffer, right=4cm of agent]    (buffer){Rollout\\/Replay Buffer};
    \node[policy, right=3cm of buffer]   (policy){Policy Update\\(min.\ $\mathcal{L}_{\text{pref}}$)};

    \coordinate (mid) at ($(buffer)!0.5!(policy)$);
    \node[synth] (synth) at ($(mid)+(0,3)$) {Preference\\Synthesizer};

    \draw[arrow] (agent) -- (buffer) 
        node[midway, above, font=\footnotesize]{Interact with environment};

    \draw[arrow] (buffer) -- (policy)
        node[midway, above, font=\footnotesize]{Human preferences};

    \draw[arrow] (buffer.north) -- (synth.south)
        node[midway, left, font=\footnotesize]{$(s,a,r)$};

    \draw[arrow] (synth.south) -- (policy.north)
        node[midway, right, font=\footnotesize]{  Synthetic preferences};

    \draw[arrow] (policy.south) .. controls +(down:1.5)  ..
        (agent.south east)
        node[midway, below, font=\footnotesize]{updated $\pi_\theta$};

  \end{tikzpicture}
  }
\caption{%
Data flow in DFA. The agent executes its current policy $\pi_\theta$ and stores the transitions (may include reward-based transitions or human-annotated preferences).  If reward-based transitions are available, the \emph{Preference Synthesizer} can convert them into synthetic preference pairs. This process can be done in either an on-policy or off-policy fashion.
Both human and synthetic preferences can be used in \emph{Policy-Update}, which minimizes the preference loss $\loss$ and outputs an improved policy.%
}
  \label{fig:dfa_workflow}
\end{figure}
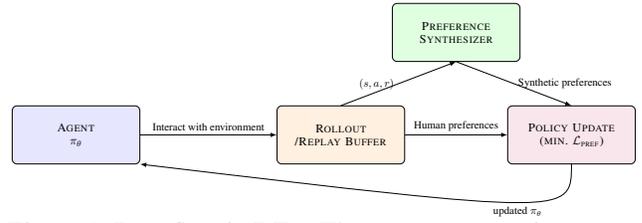

\subsection{Extension of the Loss to Trajectory-based Comparisons}
\label{sec:method_traj}

State-wise comparisons can be easy to collect (for instance, a single frame rather than a full video in video games) and give richer training signals, but one may prefer to rank the whole trajectories \citep{christiano2017deep, zhang2024zeroth}, hence, we extend DFA to accept trajectory-level preferences as well.
For trajectory-level comparisons, we store pairs  
\[
\D^{\text{traj}}
=\bigl\{(\tau_k^{+},\tau_k^{-})\bigr\}_{k=1}^{K},
\qquad
\tau=(s_1,a_1,\dots,s_T).
\]

The policy assigns a likelihood to any full trajectory as follows: $\pi_\theta(\tau)\;=\;\prod_{t=1}^{T}\pi_\theta\!\bigl(a_t\mid s_t\bigr)$.
The preference probability is the same as before, but now in terms of trajectory likelihoods:

\begin{equation}
\label{eq:cond-prob-traj}
P^{\text{traj}}_\theta\!\bigl(\tau^{+}\succ \tau^{-}\bigr)
=\frac{\pi_\theta(\tau^{+})^{\alpha}}
       {\pi_\theta(\tau^{+})^{\alpha}
        +\pi_\theta(\tau^{-})^{\alpha}},
\qquad
\alpha>0.
\end{equation}
The negative log-likelihood of \eqref{eq:cond-prob-traj} gives trajectory-based preference loss:
\begin{equation}
\loss^{\text{traj}}(\theta)
\;=\;
-\;
\E_{(\tau^{+},\tau^{-})\sim\D^{\text{traj}}}
\!\left[
\log
P^{\text{traj}}_\theta\!\bigl(\tau^{+}\succ \tau^{-}\bigr)
\right].
    \label{eq:pref-loss-traj}
\end{equation}

\section{Theoretical Analysis}
\label{sec:theory}

In this section, we show that, under
Bradley–Terry model on the soft optimal $Q$–function, minimizing
our preference loss is equivalent to recovering
the optimal policy for entropy–regularized reinforcement
learning \citep{haarnoja2017soft}.  
Concretely, we analyze the tabular setting for the state-wise preferences and identify its unique minimizer. This establishes the equivalence of preference optimization and entropy-regularized RL. 
{We should emphasize that the BT model is not a requirement of DFA Algorithm; it is only used to derive Theorem \ref{thm:pref-sat} in the following. A trajectory-wise analysis and its connection to the state-wise analysis, is provided in Appendix \ref{sec:traj-analysis}.}

\begin{assumption}[Bradley--Terry preferences on the \emph{soft}-optimal \(Q\)-function]
\label{as:BT-Qstar}
Let \(Q^{\star} : \mathcal{S}\!\times\!\mathcal{A}\!\to\!\mathbb{R}\) be the soft-optimal
state-action value function of the MDP, i.e.,

\begin{align}
Q^{\star}(s,a)\;=\;
      &\max_{\pi}\;
      \E\!\Bigl[\;
          \sum_{t=0}^{\infty}\gamma^{t}
          \bigl(r(s_t,a_t)\;+\;\lambda\,
                 \mathcal{H}\!\bigl(\pi(\,\cdot\!\mid\!s_t)\bigr)\bigr) \notag \\
      &\qquad\qquad\qquad\;\Big|\;s_{0}=s,\;a_{0}=a
      \Bigr],\notag 
\end{align}
where $\mathcal{H}(.)$ is the entropy function and $\lambda$ is the entropy coefficient.
Assume that there exists a parameter \(\beta>0\) such that, for every
\(s\in\mathcal{S}\) and any \(a,b\in\mathcal{A}\),
\[
P^{\star}\!\bigl(a\succ b \mid s\bigr)
  \;=\;
  \sigma\!\Bigl(
      \beta\,[\,Q^{\star}(s,a)-Q^{\star}(s,b)\,]
  \Bigr),
  \]\[
  \sigma(z)=\frac{1}{1+e^{-z}}.
\]
\end{assumption}

\begin{theorem}[Preference loss recovers the optimal
policy]
\label{thm:pref-sat}
Fix a state \(s\in\mathcal{S}\) and abbreviate
\(Q^{\star}_a \coloneqq Q^{\star}(s,a)\).
Suppose that \Cref{as:BT-Qstar} holds.
Under uniform sampling of \emph{ordered} pairs
\((a,b)\sim\mathrm{Unif}(\mathcal{A}^{2})\)
and the tabular full-support
parameterization
\(
  \ell_a=\log\pi(a\mid s)
\)
(\(\sum_a e^{\ell_a}=1,\;e^{\ell_a}>0\)),
consider the preference loss
\begin{align}
\mathcal{L}(\boldsymbol{\ell})
  \;=&&\;
  -\frac{1}{|\mathcal{A}|^{2}}
   \sum_{(a,b)\in\mathcal{A}}
   P^{\star}(a\succ b\mid s)\,
   \log\sigma\!\bigl(\alpha(\ell_a-\ell_b)\bigr),\nonumber
  \\
  &&\alpha>0. 
\end{align}
This loss is strictly convex on the set of full-support policies and is
minimized uniquely at
\begin{equation}
\pi_{\star}(a\mid s)
  \;=\;
  \frac{\exp\!\bigl(\tfrac{\beta}{\alpha}\,Q^{\star}_a\bigr)}
       {\displaystyle\sum_{a'\in\mathcal{A}}
        \exp\!\bigl(\tfrac{\beta}{\alpha}\,Q^{\star}_{a'}\bigr)}.
\label{eq:pi-minimizer}  
\end{equation}

Furthermore, this Gibbs distribution coincides with the \emph{global} maximizer of the entropy-regularized RL (or SAC objective) when \(\lambda=\alpha/\beta\):\footnote{All proofs are provided in the Appendix.}
        \begin{equation}
        \max_{\pi}\;
        \E_{\pi}\!\Bigl[
          \sum_{t=0}^{\infty}\gamma^{t}
          \bigl(r(s_t,a_t)+\lambda\,\mathcal{H}(\pi(\cdot\mid s_t))\bigr)
        \Bigr].
        \label{eq:ent-reg}            
        \end{equation}

\end{theorem}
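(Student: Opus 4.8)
The plan is to establish the three claims in sequence: strict convexity of $\mathcal{L}$ on the simplex of full-support policies, uniqueness of the minimizer and its closed form, and the identification with the entropy-regularized RL optimum.

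First I would rewrite the loss in terms of the logits $\boldsymbol{\ell}=(\ell_a)_{a\in\mathcal{A}}$ subject to the constraint $\sum_a e^{\ell_a}=1$. Since $-\log\sigma(z)=\log(1+e^{-z})$ is convex in $z$, and $z=\alpha(\ell_a-\ell_b)$ is linear in $\boldsymbol{\ell}$, each summand $-P^{\star}(a\succ b\mid s)\log\sigma(\alpha(\ell_a-\ell_b))$ is convex in $\boldsymbol{\ell}$; hence $\mathcal{L}$ is convex. For \emph{strict} convexity on the affine slice defined by the normalization constraint, I would argue that the Hessian of $\mathcal{L}$ is positive definite on directions $\mathbf{v}$ with $\sum_a v_a e^{\ell_a}=0$: the second derivative of $\log(1+e^{-\alpha(\ell_a-\ell_b)})$ contributes a term proportional to $(v_a-v_b)^2$, and summing over all ordered pairs $(a,b)$ yields a strictly positive quantity unless $v_a=v_b$ for all $a,b$, i.e. $\mathbf{v}$ is constant; but a nonzero constant direction violates the linearized constraint $\sum_a v_a e^{\ell_a}=0$. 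This gives strict convexity, hence at most one minimizer.

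Next, to find the minimizer I would set up the Lagrangian with multiplier $\mu$ for $\sum_a e^{\ell_a}=1$ and compute the stationarity condition $\partial\mathcal{L}/\partial\ell_a + \mu e^{\ell_a}=0$. Using $\frac{d}{dz}\log\sigma(z)=1-\sigma(z)=\sigma(-z)$ and Assumption~\ref{as:BT-Qstar} to substitute $P^{\star}(a\succ b\mid s)=\sigma(\beta(Q^{\star}_a-Q^{\star}_b))$, the gradient becomes a sum over $b$ of terms $\sigma(\beta(Q^{\star}_a-Q^{\star}_b))\,\sigma(-\alpha(\ell_a-\ell_b)) - \sigma(\beta(Q^{\star}_b-Q^{\star}_a))\,\sigma(\alpha(\ell_a-\ell_b))$ (the second coming from the pair $(b,a)$). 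I would then verify that the proposed solution $\ell_a=\tfrac{\beta}{\alpha}Q^{\star}_a - \log Z$ makes $\alpha(\ell_a-\ell_b)=\beta(Q^{\star}_a-Q^{\star}_b)$, so each paired term collapses via the identity $p\,\sigma(-u)-(1-p)\,\sigma(u)=0$ when $p=\sigma(u)$ — indeed $\sigma(u)\sigma(-u)=(1-\sigma(u))\sigma(u)$ — making the gradient vanish for a suitable choice of $\mu$. Since the loss is strictly convex and this is a stationary point on the constraint set, it is the unique global minimizer, which is exactly \eqref{eq:pi-minimizer}.

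Finally, for the identification with entropy-regularized RL, I would invoke the standard characterization (e.g.\ from \citet{haarnoja2017soft}) that the soft-optimal policy for \eqref{eq:ent-reg} with entropy coefficient $\lambda$ is the Gibbs form $\pi(a\mid s)\propto\exp(Q^{\star}(s,a)/\lambda)$, where $Q^{\star}$ is precisely the soft-optimal $Q$-function defined in Assumption~\ref{as:BT-Qstar}. Setting $\lambda=\alpha/\beta$ makes $\exp(Q^{\star}_a/\lambda)=\exp(\tfrac{\beta}{\alpha}Q^{\star}_a)$, so the minimizer \eqref{eq:pi-minimizer} coincides with this policy at state $s$; since $s$ was arbitrary, the match holds everywhere. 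The main obstacle I anticipate is the strict-convexity argument: care is needed to show the Hessian is positive definite \emph{on the tangent space of the constraint}, since $\mathcal{L}$ is only invariant-free up to the normalization and the constant direction must be correctly excluded; handling the contribution of all ordered pairs and the weights $P^{\star}(a\succ b\mid s)\in(0,1)$ cleanly is the delicate part.
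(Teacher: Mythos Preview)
Your proposal is correct and follows essentially the same route as the paper: both establish strict convexity by showing the Hessian is a weighted-Laplacian quadratic form that vanishes only along the constant direction, which the simplex tangent constraint excludes, and both identify the minimizer with the soft-optimal Gibbs policy via the standard entropy-regularized RL characterization. The only cosmetic difference is that the paper \emph{derives} the stationary point by subtracting the first-order conditions for two actions to force $P_{kj}=P^{\star}_{kj}$, whereas you \emph{verify} the candidate $\ell_a=\tfrac{\beta}{\alpha}Q^{\star}_a-\log Z$ directly; given strict convexity, both are complete.
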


Theorem~\ref{thm:pref-sat} states that, when human (or synthetic) comparisons follow a
Bradley-Terry model whose latent utility equals the ground truth
$Q^{\star}$, the preference loss is perfectly aligned with the
entropy-regularized control objective
\citep{haarnoja2017soft}.  
The optimizer
\eqref{eq:pi-minimizer} is soft-max policy whose
inverse temperature is the ratio $\beta/\alpha$: The parameter
$\beta$ captures how consistently the annotator prefers higher-value
actions, while the parameter $\alpha$ adjusts the learner's
uncertainty.  
In particular, setting $\lambda=\alpha/\beta$ recovers the SAC
trade-off between exploitation (large $\beta$) and exploration (large
$\alpha$) \citep{haarnoja2018soft}.

\begin{remark}
\label{rem:sac}
If the Bradley-Terry assumption holds for any \emph{arbitrary} soft
state-action value function, for instance, the current
critic estimate $Q_k(s,a)$ in SAC \citep{haarnoja2018soft}.  Then Theorem~\ref{thm:pref-sat} implies
that the preference loss is minimized by  
\[
\pi_{k+1}(a \mid s)=
\frac{\exp\!\bigl(\tfrac{\beta}{\alpha}\,Q_k(s,a)\bigr)}
     {\displaystyle\sum_{a'} \exp\!\bigl(\tfrac{\beta}{\alpha}\, Q_k(s,a')\bigr)}.
\]
This update is \emph{exactly} the policy-improvement step in SAC that maximizes the
entropy-regularized objective
\[
  \max_{\pi}\;
  \Bigl\{\,\mathbb{E}_{a\sim\pi}\bigl[Q_k(s,a)\bigr]
          +\lambda\,\mathcal H\!\bigl(\pi(\cdot\mid s)\bigr)
  \Bigr\}.
\]
Hence, as the critic converges ($Q_k\!\to\!Q^{\star}$), repeated
minimization of the preference loss yields the soft-optimal SAC policy. Therefore, preference learning can be
viewed as performing policy improvement in 
SAC, but driven solely by comparative feedback.
\end{remark}

\begin{remark}
\label{rem:nonuniform}
The assumption that \((a,b) \sim \mathrm{Unif}(\mathcal{A}^2)\) in Theorem~\ref{thm:pref-sat} is made for simplicity of analysis. 
In practice, one can approximate this condition by drawing a mini-batch of
ordered pairs at each update and down-sampling (or re-weighting) each
pair by the inverse of its frequency in the batch;
This produces a uniform
sub-sampled action pairs required by the theorem.
\end{remark}

\section{Experimental Results}
\label{sec:experiments}

In this section, we benchmark DFA against prior work. 
The complete code is provided in the Supplemental Material. 
We first compare DFA with the reward-based baseline SAC (hence, we have to synthesize preferences following Section \ref{sec:synth-pref-rewards}), and then against recent preference-based methods.

\subsection{Comparison with SAC via Synthetic Preferences}
\label{sec:exp-sac-synth}

\begin{figure*}[htp!]
    \centering
    \includegraphics[width=0.99\textwidth]{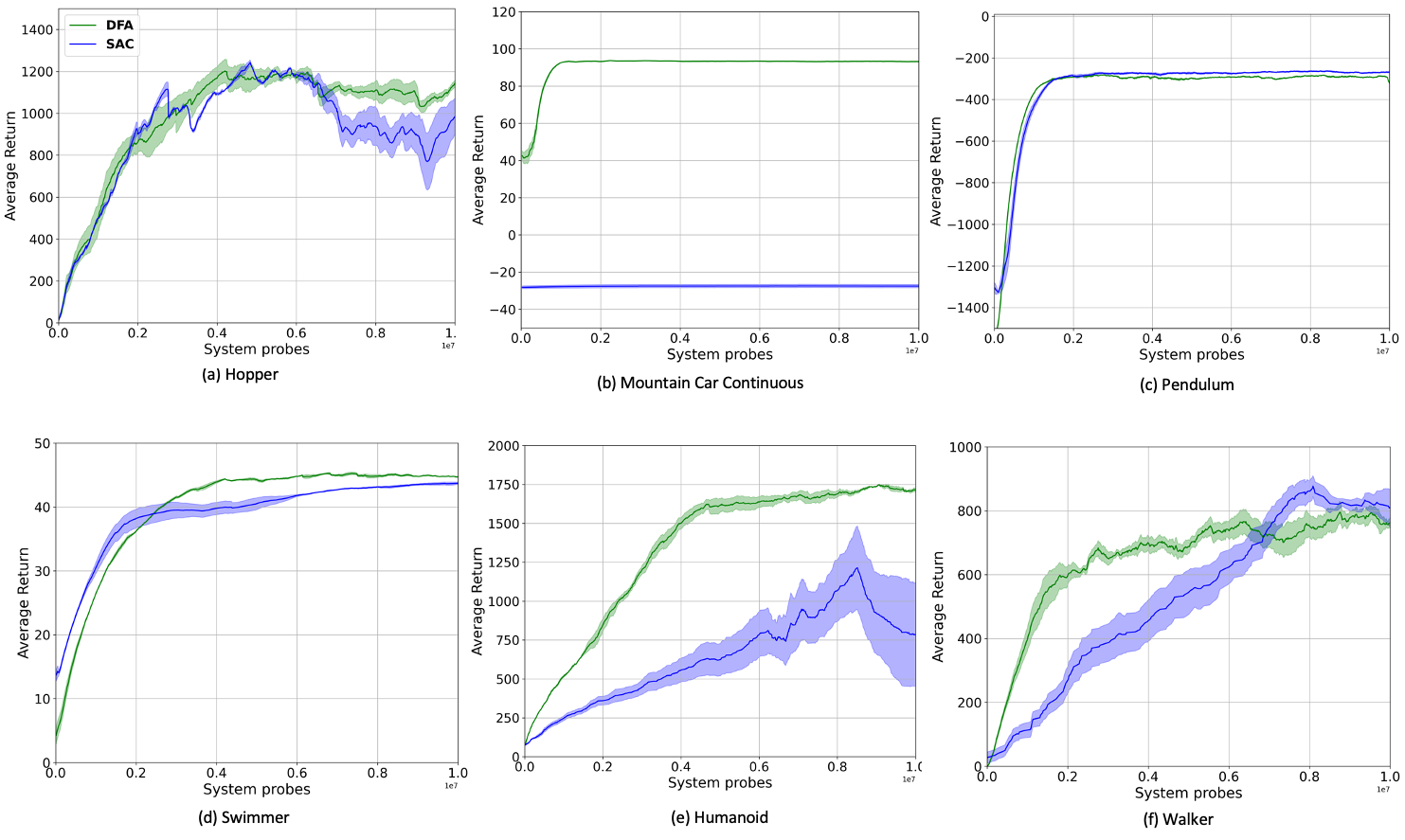}
    \caption{DFA (green) vs.\ SAC (blue) on the six MuJoCo control tasks.  
     DFA matches or exceeds SAC and shows smoother training. The solid line is the mean episode return, and the shaded region shows an 90\% confidence interval over 5 seeds.}

    \label{fig:sac_synth_curves}
\end{figure*}
In this section, we evaluate the proposed algorithm (DFA) and compare it with related work on six control tasks in MuJoCo~\citep{todorov2012mujoco}, a physics simulator known for fast and accurate simulations in areas such as robotics, biomechanics, and graphics. Since published benchmarks (e.g., OpenAI SpinningUp at \textit{https://spinningup.openai.com/en/latest/spinningup/bench .html\#benchmarks-for-spinning-up-implementations}) consistently identify SAC as the strongest baseline on many environments, we compare DFA exclusively with SAC.
We briefly explain the six environments we consider in Appendix \ref{apdx:hyperparameters}.


In Figure \ref{fig:sac_synth_curves}, we monitor the average episode return versus system probes, which represents the total number of environment interactions. In this experiment, DFA continually generates synthetic preference pairs from numerical rewards following \ref{sec:synth-pref-rewards}. The underlying RL settings and replay buffer are identical to those of SAC.
Both DFA and SAC run for \(10\!\times\!10^{6}\) system probes across 5 different random seeds. We use mini-batch size \(256\) for both algorithms. 
A new preference batch of size $N\!=\!256$ is created during every gradient step.
Figure~\ref{fig:sac_synth_curves} shows that DFA matches or exceeds SAC on Walker2d, Hopper, Swimmer, and Humanoid. 
In MountainCarContinuous, we could not find SAC settings that produced learning, a problem others have reported as well.\footnote{\url{https://github.com/rail-berkeley/softlearning/issues/76}}  
DFA, in contrast, learned a good policy on this task with the same range of hyperparameters used for the other environments.

Interestingly, DFA’s learning curves are noticeably smoother, while SAC exhibits significant fluctuations. 
We attribute this stability to the synthesized preference pairs, which are constructed according to Section \ref{sec:synth-pref-rewards}, and appear to act as an implicit denoising regularizer.
We note that reducing the learning rate or adjusting other hyperparameters to avoid fluctuations for SAC resulted in lower average returns, thus, we maintained the higher learning rate configuration to ensure fair comparison.

These results confirm the claim of Theorem \ref{thm:pref-sat}: \emph{once we use preference data aligned with the optimal Q-values, numerical rewards can be dropped without losing performance}. This unifies reward-free human alignment and reward-based RL under a single log-likelihood objective.

\subsection{Comparison with RM Methods}
\label{sec:exp-rm}

\begin{figure*}[htp!]
    \centering
    \begin{subfigure}[t]{0.48\textwidth}
        \centering
        \includegraphics[width=\textwidth]{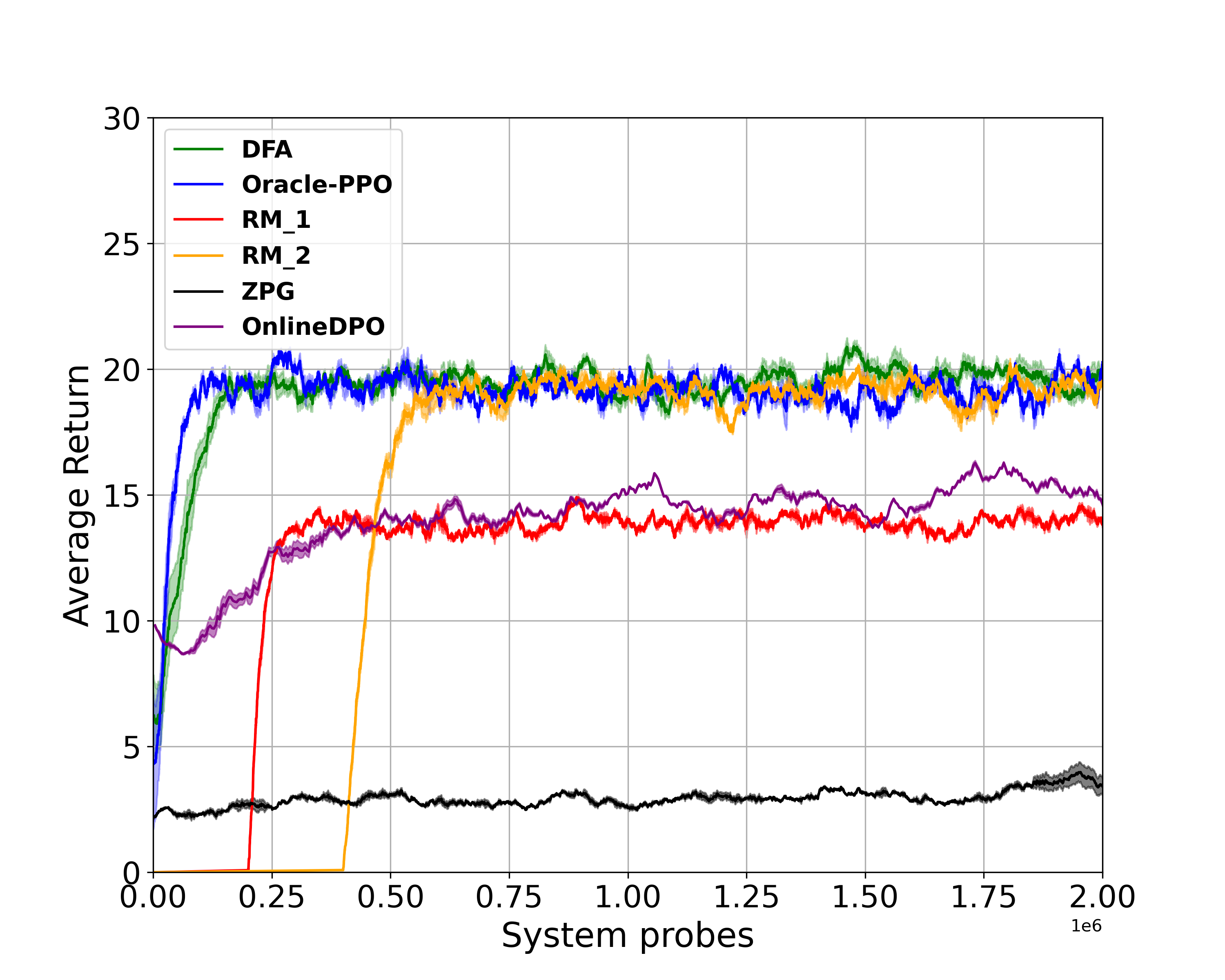}
        \caption{DFA vs.\ RM\,+\,PPO and oracle PPO.}
        \label{fig:rm_curves}
    \end{subfigure}
    \hfill
    \begin{subfigure}[t]{0.48\textwidth}
        \centering
        \includegraphics[width=\textwidth]{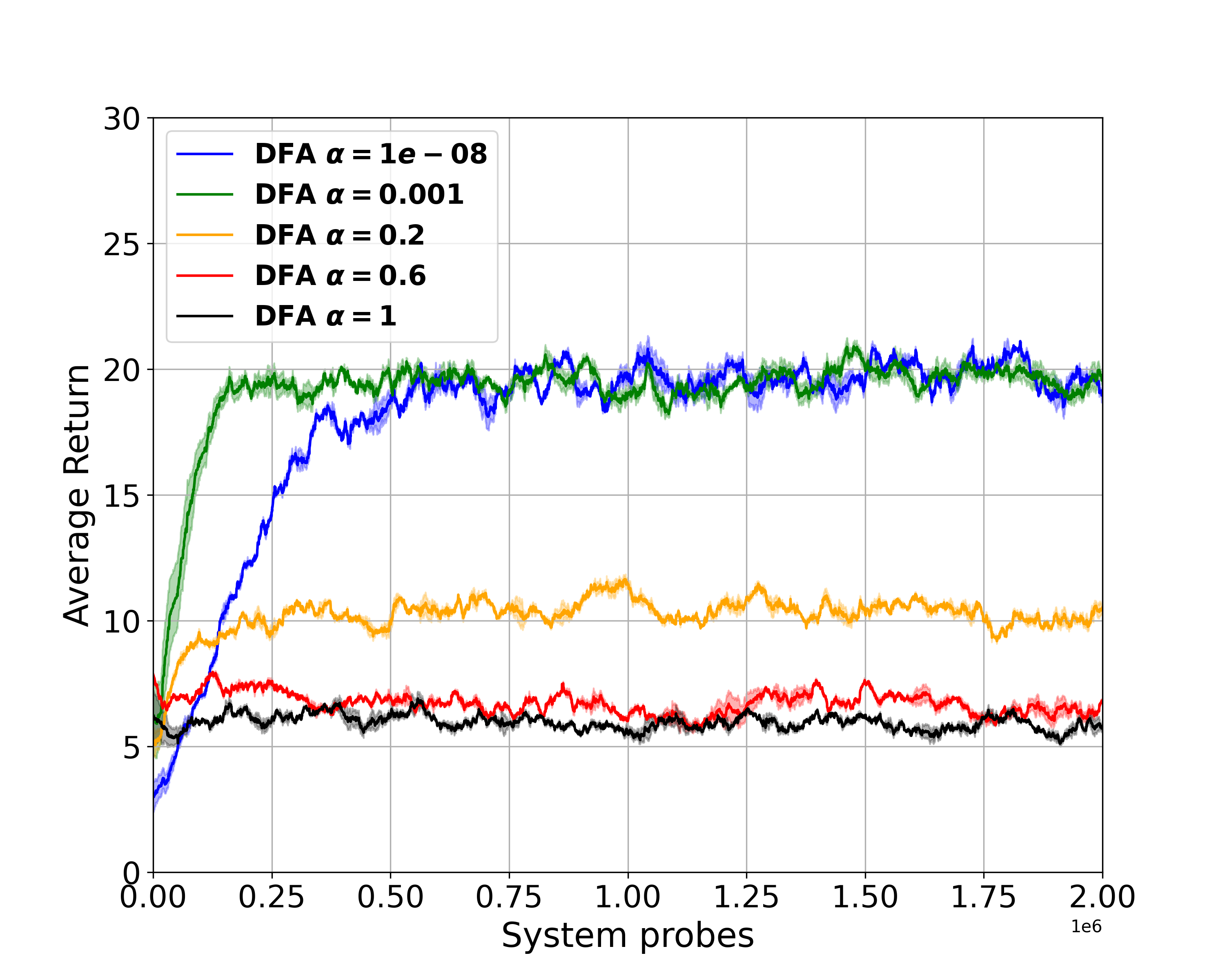}
        \caption{Effect of temperature $\alpha$.}
        \label{fig:alpha}
    \end{subfigure}
    \caption{GridWorld results. (a)~DFA learns faster and achieves higher rewards than reward-modeling baselines, approaching the oracle that has access to the true reward.  
    (b)~Effect of the temperature parameter~$\alpha$: a small but not too small value balances exploration and exploitation.  Shaded regions denote 90\% confidence intervals across 5 random seeds.}
    \label{fig:side_by_side}
\end{figure*}
In this section, we evaluate our DFA algorithm against traditional reward modeling approaches in the context of learning from human preferences. While the previous section demonstrated DFA's effectiveness with generated preferences derived from numerical rewards, here we focus on the more challenging scenario where only human comparative feedback is available, without access to ground-truth rewards.

We conduct experiments in a stochastic GridWorld environment, which provides a controlled testbed for preference-based learning \citep{zhang2024zeroth}. In this environment, the agent starts at the center of the grid and can take four actions: up, down, left, or right. The environment includes the following aspects: (1) To build the ground play, a coin is flipped for each cell, and if heads, a reward sampled from $\mathcal{N}(0,1)$ is placed in that cell; (2) While the agent is moving, with probability $0.4$, the chosen action is reversed (e.g., "up" becomes "down"). Each episode has a fixed horizon of 20 steps, and the agent's goal is to maximize the cumulative reward collected. This environment is particularly suitable for preference-based learning evaluation as it combines stochastic dynamics with a non-trivial reward structure that requires exploration.

To simulate human preferences, we simulate a panel of annotators who provide comparative feedback between trajectories. Following standard practice in RLHF literature, we model the annotator's preference probability using the Bradley-Terry model: $P(\tau_1 \succ \tau_0) = \sigma(R_1 - R_0)$, where $R_i$ is the cumulative reward of trajectory $\tau_i$ and $\sigma$ is the sigmoid function. For robustness, each preference query aggregates votes from $M$ independent annotators, and the majority vote determines the final preference. This approach simulates the noise and variability in real human feedback while maintaining a consistent underlying reward structure. For more implementation details, please see Appendix \ref{apdx:hyperparameters}.
We compare DFA against the following approaches:
\begin{itemize}
    \item \textbf{RM+PPO}: A two-stage approach that first learns a reward model from preference data using maximum likelihood estimation, then optimizes a policy using Proximal Policy Optimization (PPO) with the learned reward function.
    \item \textbf{ZPG \citep{zhang2024zeroth}}: A state-of-the-art RLHF method which estimates the policy gradient from preference differences without fitting a reward model.
    \item \textbf{Oracle-PPO (upper bound)}:
          PPO directly on the \emph{true} MDP reward $r$ (it is unavailable in
          practice, but gives an upper bound on the performance.).
    \item \textbf{OnlineDPO}: We also include the recently-proposed OnlineDPO algorithm \citep{guo2024direct} as a direct-preference baseline.
\end{itemize}

Figure \ref{fig:rm_curves} demonstrates that DFA consistently outperforms reward modeling methods and performs comparably to Oracle-PPO, which has access to the true reward function. In this experiment, we compare against two variants of RM+PPO: RM\_1 uses 200k environment steps for training the reward model, while RM\_2 uses twice as many samples (400k steps). Despite the increased data budget for RM\_2, DFA is still converging faster, highlighting the benefits of avoiding the two-stage pipeline.  
For the implementation of ZPG, we contacted the authors for the official implementation, but they indicated that the code is undergoing intellectual review.  Consequently, we re-implemented the algorithm from the paper, closely matching hyperparameters and implementation details.  
Despite our efforts (and implementation tricks such as normalized gradient and gradient clipping), ZPG could not be tuned to outperform the results shown in Figure \ref{fig:rm_curves}; we therefore report its best observed performance.  In 
Figure \ref{fig:rm_curves} we use an annotator pool of $M\!=\!500$; runs with smaller $M$ and more complex environments show the same pattern and are included in Appendix~\ref{apdx:hyperparameters}.

Figure~\ref{fig:alpha} highlights the sensitivity of DFA to the parameter $\alpha$.  
As shown in Figure~\ref{fig:alpha}, setting $\alpha$ too high (\,$\alpha=1.0$\,) gives almost no learning signal, while moderate values in the range $0.2\!-\!0.6$ yield better results. The best result comes at $\alpha=0.001$.  
When $\alpha$ is pushed to very small values (e.g., $10^{-8}$), performance drops again because the policy becomes overly stochastic.  
These results suggest that $\alpha$ should be small but not too small to balance exploration and exploitation.

\section{Conclusion}
\label{sec:conclusion}

 Dual-Feedback Actor (DFA) unifies scalar rewards and pairwise preferences in a single loss; when preferences follow a Bradley–Terry model on the optimal soft $Q$-function, this loss recovers the entropy-regularized SAC solution, formally linking reward- and preference-based RL. Empirically, DFA matches or exceeds SAC and outperforms reward-modeling baselines while training more smoothly. The main limitations are the Bradley–Terry assumption, the noise inherited by synthetic preferences from early $Q$ estimates, and the computational cost of finding the nearest state in the replay buffer. The future work can be investigating other assumptions and evaluating DFA on larger, real human-in-the-loop tasks.

\bibliography{aaai2026}
\newpage

\newpage

\newpage
\appendix
\setcounter{secnumdepth}{2}

\section{Proof of the Theorem \ref{thm:pref-sat}}
\label{apdx:sec:theory}

\begin{assumption}[Bradley--Terry preferences on the \emph{soft}-optimal \(Q\)-function]
\label{apdx:as:BT-Qstar}
Let \(Q^{\star} : \mathcal{S}\!\times\!\mathcal{A}\!\to\!\mathbb{R}\) be the soft-optimal
state-action value function of the MDP, i.e.,

\begin{align}
Q^{\star}(s,a)\;=\;
    &\max_{\pi}\;
      \E\!\Bigl[\;
          \sum_{t=0}^{\infty}\gamma^{t}
          \bigl(r(s_t,a_t)\;+\;\lambda\,
                 \mathcal{H}\!\bigl(\pi(\,\cdot\!\mid\!s_t)\bigr)\bigr) \notag \\
    &\qquad\qquad\qquad\qquad\;\Big|\;s_{0}=s,\;a_{0}=a
      \Bigr], \notag
\end{align}
where $\mathcal{H}(.)$ is the entropy function and $\lambda$ is the entropy coefficient.
Assume that there exists a parameter \(\beta>0\) such that, for every
\(s\in\mathcal{S}\) and any \(a,b\in\mathcal{A}\),

\begin{align}
P^{\star}\!\bigl(a\succ b \mid s\bigr)
  \;=\;
  &\sigma\!\Bigl(
      \beta\,[\,Q^{\star}(s,a)-Q^{\star}(s,b)\,]
  \Bigr), \notag \\
  &\qquad
  \sigma(z)=\frac{1}{1+e^{-z}}. \notag
\end{align}
\end{assumption}

\begin{theorem}[Preference loss recovers the optimal
policy]
\label{apdx:thm:pref-sat}
Fix a state \(s\in\mathcal{S}\) and abbreviate
\(Q^{\star}_a \coloneqq Q^{\star}(s,a)\).
Suppose that \Cref{apdx:as:BT-Qstar} holds.
Under uniform sampling of \emph{ordered} pairs
\((a,b)\sim\mathrm{Unif}(\mathcal{A}^{2})\)
and the tabular full-support
parameterization
\(
  \ell_a=\log\pi(a\mid s)
\)
(\(\sum_a e^{\ell_a}=1,\;e^{\ell_a}>0\)),
consider the preference loss

\begin{align}
\loss(\boldsymbol{\ell})
  \;=\;
  & -\frac{1}{|\mathcal{A}|^{2}}
   \sum_{(a,b)\in\mathcal{A}}
   P^{\star}(a\succ b\mid s)\,
   \log\sigma\!\bigl(\alpha(\ell_a-\ell_b)\bigr), \notag \\
  &\qquad
  \alpha>0.
\end{align}

This loss is strictly convex on the set of full-support policies and is
minimized uniquely at
\begin{equation}
\pi_{\star}(a\mid s)
  \;=\;
  \frac{\exp\!\bigl(\tfrac{\beta}{\alpha}\,Q^{\star}_a\bigr)}
       {\displaystyle\sum_{a'\in\mathcal{A}}
        \exp\!\bigl(\tfrac{\beta}{\alpha}\,Q^{\star}_{a'}\bigr)}.
\label{apdx:eq:pi-minimizer}  
\end{equation}

Furthermore, this Gibbs distribution coincides with the \emph{global} maximizer of the entropy-regularized RL (or SAC objective) when \(\lambda=\alpha/\beta\):
        \begin{equation}
        \max_{\pi}\;
        \E_{\pi}\!\Bigl[
          \sum_{t=0}^{\infty}\gamma^{t}
          \bigl(r(s_t,a_t)+\lambda\,\mathcal{H}(\pi(\cdot\mid s_t))\bigr)
        \Bigr].
        \label{apdx:eq:ent-reg}            
        \end{equation}

\end{theorem}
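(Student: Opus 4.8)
The plan is to split the claim into three parts: (i) strict convexity of $\mathcal{L}(\boldsymbol{\ell})$ on the simplex, (ii) identification of the unique minimizer as the Gibbs form \eqref{eq:pi-minimizer}, and (iii) the identification of that Gibbs policy with the entropy-regularized / SAC optimum when $\lambda=\alpha/\beta$.

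For (i), I would view $\mathcal{L}$ as a function of the logit vector $\boldsymbol{\ell}\in\mathbb{R}^{|\mathcal{A}|}$ constrained by $\sum_a e^{\ell_a}=1$. Each summand $-P^{\star}(a\succ b\mid s)\,\log\sigma(\alpha(\ell_a-\ell_b))$ is, up to the positive constant $P^{\star}(a\succ b\mid s)$, a composition of the convex function $z\mapsto -\log\sigma(z)=\log(1+e^{-z})$ with the linear map $\boldsymbol{\ell}\mapsto \alpha(\ell_a-\ell_b)$; hence each term is convex in $\boldsymbol{\ell}$, and the sum is convex. For \emph{strict} convexity on the affine/simplex constraint, I would argue that the set of difference functionals $\{\ell_a-\ell_b : a\neq b\}$ spans the subspace orthogonal to the all-ones vector, so the only direction in which every $\ell_a-\ell_b$ is constant is the direction $\mathbf{1}$, which is exactly the direction killed by the normalization; since $\log(1+e^{-z})$ is strictly convex, the Hessian of $\mathcal{L}$ restricted to the tangent space of the constraint is positive definite. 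This guarantees a unique minimizer.

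For (ii), I would work with the unconstrained surrogate $f(\boldsymbol{\ell})=\mathcal{L}(\boldsymbol{\ell})$ where $\ell_a = u_a - \log\sum_{a'}e^{u_{a'}}$ (softmax reparameterization), or equivalently use Lagrange multipliers for the constraint $\sum_a e^{\ell_a}=1$. Setting $\partial\mathcal{L}/\partial\ell_a$ equal to a common multiplier $\mu$ for all $a$, the stationarity condition reads (schematically) $-\frac{\alpha}{|\mathcal{A}|^2}\sum_b\big[P^{\star}(a\succ b\mid s) - \sigma(\alpha(\ell_a-\ell_b))\big] = \mu$ after collecting the terms where $a$ appears as the ``winner'' slot and as the ``loser'' slot and using $1-\sigma(z)=\sigma(-z)$ together with $P^{\star}(a\succ b)+P^{\star}(b\succ a)=1$. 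I would then \emph{verify} that the candidate $\ell_a = \frac{\beta}{\alpha}Q^{\star}_a - \log Z$ satisfies this: under Assumption~\ref{as:BT-Qstar}, $P^{\star}(a\succ b\mid s)=\sigma(\beta(Q^{\star}_a-Q^{\star}_b))$, and for the candidate $\alpha(\ell_a-\ell_b)=\beta(Q^{\star}_a-Q^{\star}_b)$, so every bracket $P^{\star}(a\succ b\mid s)-\sigma(\alpha(\ell_a-\ell_b))$ vanishes identically, making the gradient zero in the tangent space. By strict convexity from (i), this stationary point is the unique global minimizer, giving \eqref{eq:pi-minimizer}.

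For (iii), I would invoke the standard soft-policy-improvement fact (e.g.\ from \citet{haarnoja2017soft,haarnoja2018soft}): the solution of $\max_\pi \mathbb{E}_{a\sim\pi}[Q^{\star}(s,a)] + \lambda\,\mathcal{H}(\pi(\cdot\mid s))$ is the Gibbs distribution $\pi(a\mid s)\propto \exp(Q^{\star}(s,a)/\lambda)$, which follows by a one-line KKT / Gibbs-variational argument (the functional is strictly concave in $\pi$ on the simplex, and its maximizer is the Boltzmann distribution at inverse temperature $1/\lambda$). Substituting $\lambda=\alpha/\beta$ makes this coincide exactly with \eqref{eq:pi-minimizer}. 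Finally, since $Q^{\star}$ is by definition (Assumption~\ref{as:BT-Qstar}) the fixed point of the soft Bellman optimality operator, the greedy soft policy with respect to $Q^{\star}$ is the globally optimal policy of the entropy-regularized objective \eqref{eq:ent-reg}, not merely a one-step improvement; I would cite the soft-policy-iteration convergence result for this. The main obstacle I anticipate is the bookkeeping in the gradient computation of (ii) — correctly accounting for the double sum over ordered pairs so that each index $a$ contributes both as winner and loser, and checking that the per-pair residuals $P^{\star}(a\succ b\mid s)-\sigma(\alpha(\ell_a-\ell_b))$ cancel cleanly — together with making the tangent-space (constraint-respecting) argument for why a common Lagrange multiplier across all $a$ is exactly the stationarity condition; the convexity and the Gibbs-maximizer steps are essentially standard.
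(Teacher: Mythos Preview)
Your proposal is correct and follows essentially the same approach as the paper: convexity via the strict convexity of $z\mapsto-\log\sigma(z)$ together with the fact that the differences $\ell_a-\ell_b$ span $\mathbf{1}^\perp$ (the paper phrases this as the Hessian being a weighted graph Laplacian), identification of the minimizer through the first-order condition, and the standard Gibbs/SAC variational argument for part (iii). The only stylistic difference is that the paper \emph{derives} the stationary point by subtracting the first-order equations for two actions to force $P_{kj}=P^{\star}_{kj}$, whereas you \emph{verify} the Gibbs candidate directly by observing that $\alpha(\ell_a-\ell_b)=\beta(Q^{\star}_a-Q^{\star}_b)$ annihilates every residual; both are valid given strict convexity.
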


\begin{proof}
Because the policy is tabular, we fix the state \(s\), and introduce the log-policy vector
\(
  \boldsymbol{\ell}\!=\!(\ell_a)_{a\in\setA}
\).
Define the policy and the Bradley–Terry probabilities as follows:
\[
P_{ab}(\boldsymbol{\ell})
  := \sigma\!\bigl(\alpha(\ell_a-\ell_b)\bigr),
\qquad
P^{\star}_{ab}
  := \sigma\!\bigl(\beta(Q^{\star}_a-Q^{\star}_b)\bigr),
\qquad a,b\in\setA.
\]

First, we reformulate the loss of the theorem. For this purpose, we consider two cases:

\begin{enumerate}
    \item  When \(a=b\): In this case the two logits coincide, so
\(P^{\star}_{aa}=P_{aa}=\sigma(0)=\tfrac12\); hence in this case each summand
equals \(-\tfrac12\log\tfrac12=\tfrac{\log2}{2}\).
Summing over the \(|\setA|\) therefore contributes the
constant
\(\tfrac{|{\setA}|\,\log 2}{2\,|\setA|^{2}}\) in the loss.

\item 
For any two different actions \(a\neq b\) the ordered pairs
\((a,b)\) and \((b,a)\) both appear.
Because \(\sigma(z)+\sigma(-z)=1\), we have the identities
\(P_{ba}=1-P_{ab}\) and \(P^{\star}_{ba}=1-P^{\star}_{ab}\).
Grouping those two ordered terms gives the compact expression
\begin{equation}
  \lossp(\boldsymbol{\ell})
  := -\frac{1}{|\setA|^{2}}
     \sum_{\{a,b\} \in \setA, a \neq b}
       \Bigl[
         P^{\star}_{ab}\log P_{ab}(\boldsymbol{\ell})
         +P^{\star}_{ba}\log P_{ba}(\boldsymbol{\ell})
       \Bigr]
\label{eq:loss-unordered}
\end{equation}

\end{enumerate}

Therefore, 
\(
  \loss
  = \tfrac{|{\setA}|\,\log 2}{2\,|\setA|^{2}}
  +\lossp.
\)
Since the additive constant is not used in the optimization, it can be discarded. Therefore, we may optimize
\(\lossp\) instead of \(\loss\).

Now we characterize the stationary points. For this purpose, we compute the partial derivative of $\lossp$ with respect to the $\ell_k$,
$\frac{\partial\lossp}{\partial\ell_k}$. Based on Lemma \ref{lem:pref-grad} only the terms that contain \(k\) depend on \(\ell_k\), so
\begin{equation}
\frac{\partial\lossp}{\partial\ell_k}
  = -\frac{\alpha}{|\setA|^{2}}
    \sum_{b\neq k}
      \bigl[P^{\star}_{kb}-P_{kb}(\boldsymbol{\ell})\bigr].
      \label{eq:grad-loss}
\end{equation}
A stationary point satisfies
\(\sum_{b\neq k}(P_{kb}-P^{\star}_{kb})=0\) for every \(k\).
Subtracting the same identity written for another action \(j\) yields

\[
  \underbrace{\sum_{b\neq k}\!\bigl[P_{kb}-P^{\star}_{kb}\bigr]}
             _{=\,0}
 \;-\;
  \underbrace{\sum_{b\neq j}\!\bigl[P_{jb}-P^{\star}_{jb}\bigr]}
             _{=\,0}
 \;=\;0.
\]
Expand the two sums and separate the terms that explicitly involve the
pair \((k,j)\):

\begin{align}
\bigl[P_{kj}-P^{\star}_{kj}\bigr]
+\!\!\sum_{b\notin\{k,j\}}\!\!\bigl[P_{kb}-P^{\star}_{kb}\bigr]
\;-\;
\bigl[P_{jk}-P^{\star}_{jk}\bigr] \notag \\
-\!\!\sum_{b\notin\{k,j\}}\!\!\bigl[P_{jb}-P^{\star}_{jb}\bigr]
=0.
\label{eq:diff-rows}
\end{align}

Because a Bradley–Terry probability satisfies
\(P_{jk}=1-P_{kj}\) and the same holds for \(P^{\star}\),
\(
P_{jk}-P^{\star}_{jk} = -(P_{kj}-P^{\star}_{kj}).
\)
Using this identity in \eqref{eq:diff-rows} gives
\begin{equation}
2\bigl[P_{kj}-P^{\star}_{kj}\bigr]=0,
\label{eq:2+Sigma}
\end{equation}
hence,
\[
P_{kj}=P^{\star}_{kj}.
\]
Because \(\sigma\) is strictly increasing,
\[
\ell_k-\ell_j
  = \tfrac{\beta}{\alpha}\bigl(Q^{\star}_k-Q^{\star}_j\bigr),
  \qquad\forall j,k.
\]
Therefore, there exists \(c\in\mathbb{R}\) with
\begin{equation}
\ell_a
  = c + \tfrac{\beta}{\alpha}Q^{\star}_a,
  \qquad \forall a\in\setA.
\label{eq:affine-ell}
\end{equation}

Now using \(\sum_{a}e^{\ell_a}=1\) with \eqref{eq:affine-ell} gives
\[
e^{c}
  = \Bigl(\sum_{a}\exp\!\bigl(\tfrac{\beta}{\alpha}Q^{\star}_a\bigr)\Bigr)^{-1}.
\]
Hence, the unique stationary point is as follows:
\begin{equation}
\pi_{\star}(a\mid s)
  = \frac{\exp(\tfrac{\beta}{\alpha}Q^{\star}_a)}
         {\displaystyle\sum_{a'\in\setA}
          \exp(\tfrac{\beta}{\alpha}Q^{\star}_{a'})}.
\label{eq:pi-star}
\end{equation}

If we write the KKT conditions of the loss and derive the value of the Lagrange multiplier \(\lambda\),  \(\lambda\) will be zero. Hence, the above stationary point is valid. See Lemma ~\ref{lem:kkt-zero} for the details.




\paragraph{Computing Hessian:} To compute the Hessian we define $w_{ab}$ as follows:

\[
w_{ab}:=P_{ab}(\boldsymbol{\ell})P_{ba}(\boldsymbol{\ell})
       =P_{ab}(\boldsymbol{\ell})\!\bigl[1-P_{ab}(\boldsymbol{\ell})\bigr]
       \;>\;0.
\]
Using
\[
\frac{\partial P_{ab}}{\partial\ell_a}=+\alpha\,w_{ab},
\qquad
\frac{\partial P_{ab}}{\partial\ell_b}=-\alpha\,w_{ab}.
\]
Now, if we differentiate \eqref{eq:grad-loss} once more. For (\(i\neq j\)):
\[
\frac{\partial^{2}\mathcal{L}_{\text p}}
     {\partial\ell_j\,\partial\ell_i}
   =-\frac{\alpha}{|\mathcal{A}|^{2}}
     \bigl[\alpha\,w_{ij}\bigr]
   =-\frac{\alpha^{2}}{|\mathcal{A}|^{2}}\,w_{ij}.
\]

For (\(i=j\)):
\[
\frac{\partial^{2}\mathcal{L}_{\text p}}
     {\partial\ell_i^{2}}
   =-\frac{\alpha}{|\mathcal{A}|^{2}}
     \sum_{b\neq i}
        (-\alpha\,w_{ib})
   =\frac{\alpha^{2}}{|\mathcal{A}|^{2}}
     \sum_{b\neq i} w_{ib}.
\]

Now using the above derivations, we write the matrix form of Hessian.
\[
\;
\Hess
   =\frac{\alpha^{2}}{|\mathcal{A}|^{2}}
      \bigl(\mathbf D-\mathbf W\bigr)
\qquad
\begin{aligned}
   &W_{ij}=w_{ij}\;(i\neq j),\quad W_{ii}=0,\\
   &D_{ii}=\sum_{b\neq i}w_{ib}.
\end{aligned}
\]
 To prove that $\lossp$ is strictly convex, we should prove that $\Hess$ (or $\Less$) is positive-definite. The matrix \(\Less=\Dess-\Wess\) is a weighted graph Laplacian of the
complete graph on \(\mathcal A\) as its off–diagonal entries are negative, diagonals are
positive, and each row sums to zero\citep{spielman2010algorithms}. 

For a matrix $\Less$ to be positive definite, we should have for any \(v\in\mathbb R^{|\mathcal A|}\), $v^{\!\top} \Less v>0$. In our case, one has
\begin{equation}
v^{\!\top} \Less v
  =\frac12\sum_{i,j}w_{ij}(v_i-v_j)^{2}.
    \label{eq:quadratic}
\end{equation}
Identity \eqref{eq:quadratic} follows from expanding
\(v^{\!\top}(\Dess-\Wess)v\) and re-grouping terms
(see \citet{spielman2010algorithms} for more details).
Because every weight \(w_{ij}>0\), the RHS is
non-negative, hence it is always equal to or bigger than zero. Therefore $\Less$ is positive-semidifinite ($\Less\succeq 0$) and equals to zero \emph{iff}
\[
v_1=\dots=v_{|\mathcal A|}.
\]

In other words, only subspace
\(\mathrm{span}\{\mathbf1\}
  =\{\,a\mathbf1 : a\in\R, a\neq 0\}\),
whose members have all coordinates equal
(\(v_1=\dots=v_{|\mathcal A|}\)) makes $v^{\!\top} \Less v$ equal to zero.
Now, we prove that given the constraint imposed on our problem, $v^{\!\top} \Less v$ cannot be equal to zero.

In general unconstrained optimization, \(v\) in $v^{\!\top} \Less v$, shows all possible directions in $\mathbb R^{|\mathcal A|}$.
In our case, the optimization is constrained, and the function \( \lossp \) is restricted to a constraint set \( C \) (the probability simplex: $\sum_{a\in\setA}e^{\ell_a}-1=0$), hence the condition \( v^\top \Hess \, v \geq 0 \) is only required for vectors \( v \) in the tangent space of \( C \) at \( \ell \). This is because the tangent space of a convex set \( C \) at any \( v \in C \) is the set of feasible directions within \( C \) \citep{absil2009optimization}.

The parameter set of the $\Hess$ (or accordingly $\Less$) is 
\[
\mathcal E:=\Bigl\{\boldsymbol\ell\in\mathbb R^{|\mathcal A|}\;:\;
              \sum_a e^{\ell_a}=1,\;
              e^{\ell_a}>0\Bigr\}
\]

We define
\(g(\boldsymbol\ell)=\sum_{a\in\setA}e^{\ell_a}-1=0\)
and its gradient
\(\nabla g(\boldsymbol\ell)=e^{\boldsymbol\ell}
   :=(e^{\ell_1},\dots,e^{\ell_{|\setA|}})^{\!\top}>0\).
A displacement \(v\in\R^{|\setA|}\) is \emph{feasible} iff  it is in the tangent space (denote it with \( T_{\boldsymbol\ell}\mathcal E\)) that is $\nabla g(\boldsymbol\ell)\cdot v \;=\;0$. Hence,
\[
 e^{\boldsymbol\ell}\!\cdot v \;=\;0.
\]

Now consider a vector in $\mathrm{span}\{\mathbf1\}$. For any \(v=a\mathbf 1\) with \(a\neq0\),
\[
e^{\boldsymbol\ell}\!\cdot v
  \;=\; a\,e^{\boldsymbol\ell}\!\cdot\mathbf 1
  \;=\; a\!\sum_{b\in\setA}e^{\ell_b}
  \;=\; a\;\neq\;0.
\]
Hence, \(v\notin T_{\boldsymbol\ell}\mathcal E\).
Hence, for every \emph{feasible} \(v\neq0\),
\[
v^{\!\top}\Less v \;>\;0
\quad\Longrightarrow\quad
v^{\!\top}\Hess v
  =\frac{\alpha^{2}}{|\setA|^{2}}\,v^{\!\top}\Less v \;>\;0 .
\]
Thus, the Hessian is positive–definite along all feasible directions,
which establishes the strict convexity of the preference loss on the
full support tabular policy.

\paragraph{Another way to proof the uniqueness of the solution:}

Assume, for contradiction, that there exists another log–policy vector
\(\tilde{\boldsymbol\ell}\in\mathcal E\) that also satisfies the
stationarity system
\(
  \sum_{b\neq k}(P_{kb}-P^{\star}_{kb})=0,\;\forall k
\)
and the normalization
\(\sum_{a}e^{\tilde{\ell}_a}=1\).
For every ordered pair \((k,j)\) the argument leading to
\eqref{eq:2+Sigma} then gives
\(P_{kj}(\tilde{\boldsymbol\ell})=P^{\star}_{kj}\)
as well.
Because \(\sigma\) is strictly increasing, this
implies
\[
  \tilde{\ell}_k-\tilde{\ell}_j
    \;=\;\ell_k-\ell_j,
    \qquad\forall k,j\in\mathcal A,
\]
hence
\(
  \tilde{\boldsymbol\ell}=\boldsymbol\ell+\delta\mathbf1
\)
for some \(\delta\in\R\setminus\{0\}\).
But then
\[
  \sum_{a}e^{\tilde{\ell}_a}
  \;=\;
  e^{\delta}\sum_{a}e^{\ell_a}
  \;=\;
  e^{\delta}\neq1,
\]
contradicting the constraint that every feasible
\(\boldsymbol\ell\) must satisfy \(\sum_{a}e^{\ell_a}=1\).
Therefore \(\delta=0\) and \(\tilde{\boldsymbol\ell}=\boldsymbol\ell\),
proving that the stationary point is unique.

\paragraph{Connection with soft actor-critic:}
For the same fixed state consider
\[
J_{\tau}(\pi)
  := \E_{a\sim\pi}[Q^{\star}_a] + \tau\mathcal{H}(\pi),
\qquad
\mathcal{H}(\pi):=-\sum_a\pi(a)\log\pi(a).
\]
Introducing a Lagrange multiplier \(\lambda\) for
\(\sum_a\pi(a)=1\) gives
\(Q^{\star}_a-\tau(\log\pi(a)+1)-\lambda=0\),
hence \(\pi(a)\!\propto\!\exp(Q^{\star}_a/\tau)\).
Normalization produces
\[
\pi_{\text{SAC}}(a\mid s)
  = \frac{\exp(Q^{\star}_a/\tau)}
         {\sum_{a'}\exp(Q^{\star}_{a'}/\tau)}.
\]
Choosing \(\tau=\alpha/\beta\) recovers \eqref{eq:pi-star}, so the
minimizer of $\lossp$ coincides with the soft actor-critic solution with
temperature \(\tau=\alpha/\beta\).

\end{proof}

\begin{lemma}[Gradient of the unordered-pair preference loss]
\label{lem:pref-grad}
Let
\[
  \lossp(\boldsymbol{\ell})
    := -\frac{1}{|\setA|^{2}}
       \sum_{\{a,b\} \in \setA, a \neq b} 
         \Bigl[
           P^{\star}_{ab}\log P_{ab}(\boldsymbol{\ell})
          +P^{\star}_{ba}\log P_{ba}(\boldsymbol{\ell})
         \Bigr],
\]
where \(P^{\star}_{ab}\), $P_{ab}(\boldsymbol{\ell})$ and $\boldsymbol{\ell}$ are defined in Theorem \ref{apdx:thm:pref-sat}.
Then, for every action \(k\in\setA\),
\begin{equation}
\frac{\partial\lossp}{\partial\ell_k}
  =-\frac{\alpha}{|\setA|^{2}}
    \sum_{b\neq k}\bigl[P^{\star}_{kb}-P_{kb}(\boldsymbol{\ell})\bigr].
\end{equation}
\end{lemma}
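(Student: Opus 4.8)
The plan is a direct term-by-term differentiation, with the only subtlety being which summands actually depend on \(\ell_k\) and a sign in the chain rule. First I would record the scalar fact underneath everything: since \(P_{ab}(\boldsymbol{\ell})=\sigma(\alpha(\ell_a-\ell_b))\) and \(\tfrac{d}{dz}\log\sigma(z)=1-\sigma(z)=\sigma(-z)\), the chain rule yields \(\partial_{\ell_a}\log P_{ab}=\alpha(1-P_{ab})=\alpha P_{ba}\), \(\partial_{\ell_b}\log P_{ab}=-\alpha(1-P_{ab})=-\alpha P_{ba}\), and \(\partial_{\ell_k}\log P_{ab}=0\) whenever \(k\notin\{a,b\}\). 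The factors \(P^{\star}_{ab}\) are constants in \(\boldsymbol\ell\) and come along for the ride. A one-sentence caveat here: the derivative is taken with respect to the unconstrained vector \(\boldsymbol{\ell}\in\R^{|\setA|}\); the simplex constraint \(\sum_a e^{\ell_a}=1\) is not enforced at this stage and is instead handled by the KKT analysis inside the proof of \Cref{apdx:thm:pref-sat}.

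Next I would isolate, in the unordered-pair sum defining \(\lossp\), exactly the summands involving \(\ell_k\): these are precisely the pairs \(\{k,b\}\) with \(b\neq k\), each contributing \(P^{\star}_{kb}\log P_{kb}(\boldsymbol\ell)+P^{\star}_{bk}\log P_{bk}(\boldsymbol\ell)\). Differentiating this pair-term in \(\ell_k\) with the scalar facts above gives \(\alpha P^{\star}_{kb}(1-P_{kb})-\alpha P^{\star}_{bk}P_{kb}\); the only place care is needed is that \(\partial_{\ell_k}\log P_{bk}=-\alpha P_{kb}\) (it is \emph{not} \(+\alpha P_{bk}\)), because \(\ell_k\) enters \(\log\sigma(\alpha(\ell_b-\ell_k))\) with a minus sign. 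Then the clean cancellation: substituting the Bradley–Terry reciprocity identities \(P^{\star}_{bk}=1-P^{\star}_{kb}\) and \(P_{bk}=1-P_{kb}\) (both instances of \(\sigma(z)+\sigma(-z)=1\)), the \(P^{\star}_{kb}P_{kb}\) cross-terms cancel and the pair-term derivative collapses to \(\alpha\,(P^{\star}_{kb}-P_{kb})\). Summing over all \(b\neq k\) and restoring the prefactor \(-1/|\setA|^{2}\) produces \(\partial\lossp/\partial\ell_k=-\tfrac{\alpha}{|\setA|^{2}}\sum_{b\neq k}[P^{\star}_{kb}-P_{kb}(\boldsymbol\ell)]\), as claimed.

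There is no real obstacle here — the argument is a short computation — so the ``hard part'' is purely bookkeeping: confirming that no summand lying outside the star of \(k\) in the complete graph on \(\setA\) contributes, and tracking the signs so that the two reciprocity substitutions line up to cancel the quadratic terms. I would present it in that order (scalar derivative lemma, restriction to the star of \(k\), pairwise differentiation, reciprocity cancellation, summation), which keeps the sign accounting transparent and makes the result immediately reusable in \eqref{eq:grad-loss} of the main proof.
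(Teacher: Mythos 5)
Your proposal is correct and follows essentially the same route as the paper's proof: differentiate each unordered-pair term $P^{\star}_{ab}\log P_{ab}+P^{\star}_{ba}\log P_{ba}$ via $\tfrac{d}{dz}\log\sigma(z)=1-\sigma(z)$, apply the reciprocity identities $P^{\star}_{ba}=1-P^{\star}_{ab}$ and $P_{ba}=1-P_{ab}$ to collapse the pair derivative to $\alpha(P^{\star}_{kb}-P_{kb})$, and sum over the pairs containing $k$. The only difference is presentational: the paper tracks which slot $k$ occupies with indicator functions $\mathbf1\{k=a\}-\mathbf1\{k=b\}$ and resolves the two cases at the end, whereas you index the relevant pairs directly as $\{k,b\}$, which slightly streamlines the sign bookkeeping.
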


\begin{proof}
For each unordered pair \(\{a,b\}\), define
\[
  g_{ab}(\boldsymbol{\ell})
    :=P^{\star}_{ab}\log P_{ab}(\boldsymbol{\ell})
      +P^{\star}_{ba}\log P_{ba}(\boldsymbol{\ell}).
\]
Because
\(
  \dfrac{d}{dz}\log\sigma(z)=1-\sigma(z)
\)
and
\(
  \partial(\ell_a-\ell_b)/\partial\ell_k
    =\mathbf1\{k=a\}-\mathbf1\{k=b\},
\)
\[
\frac{\partial}{\partial\ell_k}\log P_{ab}
  =\alpha(1-P_{ab})\bigl[\mathbf1\{k=a\}-\mathbf1\{k=b\}\bigr],
\]\[
\frac{\partial}{\partial\ell_k}\log P_{ba}
  =\alpha P_{ab}\bigl[\mathbf1\{k=b\}-\mathbf1\{k=a\}\bigr].
\]
Since \(P^{\star}_{ba}=1-P^{\star}_{ab}\) and \(P_{ba}=1-P_{ab}\),
\begin{equation}
\frac{\partial g_{ab}}{\partial\ell_k}
  =\alpha(\mathbf1\{k=a\}-\mathbf1\{k=b\})
     \bigl[P^{\star}_{ab}-P_{ab}\bigr].
     \label{eq:gab-grad}
\end{equation}
Insert \eqref{eq:gab-grad} into the loss and sum over all unordered pairs that
contain \(k\):
\[
\frac{\partial\lossp}{\partial\ell_k}
  =-\frac{\alpha}{|\setA|^{2}}
    \sum_{\{a,b\}}(\mathbf1\{k=a\}-\mathbf1\{k=b\})
              \bigl[P^{\star}_{ab}-P_{ab}\bigr].
\]
If \(k=a\) (and \(b>k\)) the indicator equals \(+1\); if \(k=b\) (with
\(a<k\)) it equals \(-1\).
Using again the symmetry
\(P^{\star}_{ak}=1-P^{\star}_{ka}\) and \(P_{ak}=1-P_{ka}\),
the negative sign flips the difference so that both cases contribute the
same quantity \(P^{\star}_{kb}-P_{kb}\).  Hence
\[
\frac{\partial\lossp}{\partial\ell_k}
  =-\frac{\alpha}{|\setA|^{2}}
    \sum_{b\neq k}\bigl[P^{\star}_{kb}-P_{kb}(\boldsymbol{\ell})\bigr],
\]
completing the proof.
\end{proof}

\begin{lemma}[The KKT multiplier]\label{lem:kkt-zero}
Consider the constrained minimization of the unordered-pair preference
loss
\[
  \min_{\boldsymbol{\ell}\in\mathbb{R}^{|\mathcal A|}}
        \lossp(\boldsymbol{\ell})
  \quad\text{s.t.}\quad
  g(\boldsymbol{\ell}) := \sum_{a\in\mathcal A}e^{\ell_a}-1 = 0,
\]
with $\lossp$ defined in \eqref{eq:loss-unordered}.  
Let $\lambda\in\mathbb R$ be the Lagrange multiplier associated with the
normalization constraint.  
At every KKT point $(\boldsymbol{\ell},\lambda)$ one necessarily has
\[
  \lambda \;=\; 0 .
\]
\end{lemma}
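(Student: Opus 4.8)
The plan is to couple the stationarity part of the KKT system with a global summation that exploits the antisymmetry of the Bradley--Terry residuals $P^{\star}_{kb}-P_{kb}(\boldsymbol{\ell})$. First I would write the Lagrangian $\Lambda(\boldsymbol{\ell},\lambda)=\lossp(\boldsymbol{\ell})+\lambda\,g(\boldsymbol{\ell})$ and record the first-order (stationarity) conditions: for every $k\in\setA$,
\[
\frac{\partial\lossp}{\partial\ell_k}(\boldsymbol{\ell})+\lambda\,e^{\ell_k}=0,
\]
since $\partial g/\partial\ell_k=e^{\ell_k}$. For the first term I would substitute the gradient formula of Lemma~\ref{lem:pref-grad},
\[
\frac{\partial\lossp}{\partial\ell_k}(\boldsymbol{\ell})
  =-\frac{\alpha}{|\setA|^{2}}\sum_{b\neq k}\bigl[P^{\star}_{kb}-P_{kb}(\boldsymbol{\ell})\bigr].
\]

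The key step is then to sum the $|\setA|$ stationarity equations over $k$. The aggregated gradient $\sum_{k}\partial\lossp/\partial\ell_k$ vanishes identically: the resulting double sum $\sum_{k}\sum_{b\neq k}\bigl[P^{\star}_{kb}-P_{kb}\bigr]$ ranges over all ordered pairs, and pairing $(k,b)$ with $(b,k)$ and invoking $P^{\star}_{bk}=1-P^{\star}_{kb}$, $P_{bk}=1-P_{kb}$ makes each unordered pair contribute $\bigl[P^{\star}_{kb}-P_{kb}\bigr]+\bigl[P^{\star}_{bk}-P_{bk}\bigr]=0$. (This is exactly the antisymmetry already used to obtain \eqref{eq:2+Sigma} in the proof of Theorem~\ref{apdx:thm:pref-sat}; conceptually it reflects that $\lossp$ depends on $\boldsymbol{\ell}$ only through the differences $\ell_a-\ell_b$, so $\nabla\lossp$ lies in $\mathbf{1}^{\perp}$.) Consequently the summed stationarity condition reduces to $\lambda\sum_{k\in\setA}e^{\ell_k}=0$.

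Finally, feasibility of the KKT point means $g(\boldsymbol{\ell})=0$, i.e.\ $\sum_{k\in\setA}e^{\ell_k}=1$; substituting this yields $\lambda=0$, which is the claim.

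There is no deep obstacle here: the only care needed is the bookkeeping in the ordered-versus-unordered pair summation and the correct application of $P^{\star}_{kb}-P_{kb}=-(P^{\star}_{bk}-P_{bk})$. Once the aggregated gradient is shown to be zero, the multiplier is pinned down in a single line; this is also precisely what justifies, in the proof of Theorem~\ref{apdx:thm:pref-sat}, that the interior stationary point $\pi_{\star}$ obtained by ignoring the normalization constraint genuinely satisfies the KKT conditions.
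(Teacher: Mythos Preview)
Your proposal is correct and follows essentially the same argument as the paper: write the KKT stationarity equations using the gradient formula of Lemma~\ref{lem:pref-grad}, sum over all actions, use the antisymmetry $P^{\star}_{kb}-P_{kb}=-(P^{\star}_{bk}-P_{bk})$ to annihilate the aggregated gradient, and invoke feasibility $\sum_k e^{\ell_k}=1$ to conclude $\lambda=0$. Your added remark that $\nabla\lossp\in\mathbf{1}^{\perp}$ because $\lossp$ depends only on the differences $\ell_a-\ell_b$ is a clean conceptual summary of why the cancellation occurs.
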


\begin{proof}
The KKT stationarity condition for each action $k\in\mathcal A$ is
\[
  -\frac{\alpha}{|\mathcal A|^{2}}
      \sum_{b\neq k}\bigl[P^{\star}_{kb}-P_{kb}(\boldsymbol{\ell})\bigr]
  \;+\;\lambda\,e^{\ell_k}=0 .
\tag{\theequation}\label{eq:kkt-stat}
\]
Summing \eqref{eq:kkt-stat} over all $k$ gives
\begin{equation}
  -\frac{\alpha}{|\mathcal A|^{2}}
      \sum_{k}\sum_{b\neq k}
        \bigl[P^{\star}_{kb}-P_{kb}(\boldsymbol{\ell})\bigr]
  + \lambda\sum_{k}e^{\ell_k}=0 .
  \label{eq:kkt-derivative-sum}
\end{equation}

Because the log–policy variables satisfy the equality constraint
\(\sum_{k}e^{\ell_k}=1\),  
the second term in \eqref{eq:kkt-derivative-sum} sums to \(\lambda\).

Rewrite the double sum by grouping every \emph{ordered} pair
\((k,b)\) with its reverse \((b,k)\):
\[
  \sum_{k}\,\sum_{b\neq k}
      \bigl[P^{\star}_{kb}-P_{kb}\bigr]
  \;=\;
  \sum_{\substack{k,b\in\mathcal{A}\\ k<b}}
      \Bigl[(P^{\star}_{kb}-P_{kb})
            +(P^{\star}_{bk}-P_{bk})\Bigr].
\]
Using the fact that
\(P^{\star}_{kb}+P^{\star}_{bk}=1\) and
\(P_{kb}+P_{bk}=1\),
each term in the bracket equals to \(1-1=0\).  
Therefore, the entire double sum is zero, and we can imply that $\lambda=0$.

\end{proof}


\section{Trajectory–Level Analysis of DFA}
\label{sec:traj-analysis}

Let $\mathcal{T}_H$ be the (finite) set of all length–$H$ trajectories
$
  \tau=(s_{0},a_{0},\ldots ,s_{H-1},a_{H-1})
$
that can be generated by the MDP.

\begin{assumption}[Trajectory-level Bradley–Terry model]
\label{ass:TBT}
Let
\[
  G^{\star}(\tau)
  \;:=\;
  \sum_{t=0}^{H-1}\gamma^{t}\,
     \Bigl(r(s_t,a_t)
            +\lambda\,\mathcal H\!\bigl(\pi^{\star}(\,\cdot\!\mid\!s_t)\bigr)
     \Bigr)
\]
be the {\em soft–optimal return} of trajectory $\tau$ under the entropy
coefficient $\lambda>0$.  
There exists $\beta>0$ such that for every pair
$\tau_1,\tau_2\in\mathcal{T}_H$
\[
  P^{\star}\!\bigl(\tau_1\succ\tau_2\bigr)
  \;=\;
  \sigma\!\bigl(\beta\,[G^{\star}(\tau_1)-G^{\star}(\tau_2)]\bigr),
  \qquad
  \sigma(z)=\tfrac{1}{1+e^{-z}}.
\]
\end{assumption}

\subsection{Trajectory preference loss}

Parameterise a \emph{trajectory-tabular} policy by one log-likelihood per
path,
$
  L_{\tau} = \log\pi_\theta(\tau),
$
subject to the simplex constraint
$
  \sum_{\tau\in\mathcal{T}_H} e^{L_\tau}=1,\;
  e^{L_\tau}>0.
$
For ordered trajectory pairs sampled uniformly from
$\mathcal{T}_H^{\,2}$ define the loss
\begin{align}
  \mathcal{L}_{\mathrm{traj}}(L)
  := -\frac{1}{|\mathcal{T}_H|^{2}}
     \sum_{\tau_1,\tau_2\in\mathcal{T}_H}
       P^{\star}(\tau_1\!\succ\!\tau_2)\,
       \log\sigma\!\bigl(\alpha[L_{\tau_1}-L_{\tau_2}]\bigr),
  \label{eq:traj-loss}
\end{align}
with parameter $\alpha>0$.

\begin{theorem}[Optimal policy for trajectory loss]
\label{thm:traj-opt}
Assume \Cref{ass:TBT} and uniform sampling of ordered trajectory pairs.
The loss \eqref{eq:traj-loss} is strictly convex on the probability
simplex $\{\!L:\sum_{\tau}e^{L_\tau}=1\!\}$ and attains its \emph{unique}
minimum at the Gibbs distribution
\begin{equation}
  \pi_{\star}(\tau)
  \;=\;
  \frac{\exp\!\bigl(\tfrac{\beta}{\alpha}\,G^{\star}(\tau)\bigr)}
       {\displaystyle\sum_{\tau'\in\mathcal{T}_H}
          \exp\!\bigl(\tfrac{\beta}{\alpha}\,G^{\star}(\tau')\bigr)}.
  \label{eq:gibbs-traj}
\end{equation}

The proof is similar to the proof of Theorem \ref{thm:pref-sat}
\end{theorem}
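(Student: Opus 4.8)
The plan is to mirror the proof of Theorem~\ref{thm:pref-sat} almost verbatim, replacing actions $a\in\setA$ by trajectories $\tau\in\mathcal{T}_H$ and the soft $Q$-values $Q^{\star}_a$ by the soft returns $G^{\star}(\tau)$. First I would fix the same notational setup: write $L_\tau=\log\pi_\theta(\tau)$, define $P_{\tau_1\tau_2}(\mathbf L)=\sigma(\alpha(L_{\tau_1}-L_{\tau_2}))$ and $P^{\star}_{\tau_1\tau_2}=\sigma(\beta(G^{\star}(\tau_1)-G^{\star}(\tau_2)))$, and split the double sum in \eqref{eq:traj-loss} into the diagonal terms ($\tau_1=\tau_2$), which contribute an additive constant $\tfrac{|\mathcal{T}_H|\log 2}{2|\mathcal{T}_H|^2}$ that can be discarded, and the off-diagonal unordered pairs, which yield the symmetrized loss whose gradient is, by the exact analogue of Lemma~\ref{lem:pref-grad}, $\partial\lossp/\partial L_\kappa=-\tfrac{\alpha}{|\mathcal{T}_H|^2}\sum_{\tau\neq\kappa}[P^{\star}_{\kappa\tau}-P_{\kappa\tau}(\mathbf L)]$.

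Next I would extract the stationarity condition. Setting the gradient to zero gives $\sum_{\tau\neq\kappa}(P_{\kappa\tau}-P^{\star}_{\kappa\tau})=0$ for every $\kappa$; subtracting the same equation for another trajectory $\iota$ and using the Bradley--Terry antisymmetry $P_{\iota\kappa}=1-P_{\kappa\iota}$ (and likewise for $P^{\star}$) collapses everything to $P_{\kappa\iota}=P^{\star}_{\kappa\iota}$ for every ordered pair, exactly as in \eqref{eq:2+Sigma}. Since $\sigma$ is strictly increasing, this forces $L_\kappa-L_\iota=\tfrac{\beta}{\alpha}(G^{\star}(\kappa)-G^{\star}(\iota))$ for all $\kappa,\iota$, hence $L_\tau=c+\tfrac{\beta}{\alpha}G^{\star}(\tau)$ for a common constant $c$; imposing $\sum_\tau e^{L_\tau}=1$ pins down $e^{c}=\bigl(\sum_\tau e^{(\beta/\alpha)G^{\star}(\tau)}\bigr)^{-1}$ and yields the Gibbs form \eqref{eq:gibbs-traj}. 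The KKT multiplier for the simplex constraint vanishes by the same counting argument as in Lemma~\ref{lem:kkt-zero}: summing the stationarity equations over all $\kappa$ and pairing $(\kappa,\tau)$ with $(\tau,\kappa)$ makes the double sum telescope to zero, so $\lambda\sum_\tau e^{L_\tau}=\lambda=0$.

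For strict convexity I would compute the Hessian as in the original proof, obtaining $\Hess=\tfrac{\alpha^2}{|\mathcal{T}_H|^2}(\Dess-\Wess)$ with weights $w_{\tau_1\tau_2}=P_{\tau_1\tau_2}(1-P_{\tau_1\tau_2})>0$; this is the Laplacian of the complete graph on $\mathcal{T}_H$, so $\mathbf v^{\top}\Hess\mathbf v=\tfrac{\alpha^2}{2|\mathcal{T}_H|^2}\sum_{\tau_1,\tau_2}w_{\tau_1\tau_2}(v_{\tau_1}-v_{\tau_2})^2\geq 0$ with equality only on $\mathrm{span}\{\mathbf 1\}$. Since any nonzero $\mathbf v=a\mathbf 1$ has $\nabla g\cdot\mathbf v=a\sum_\tau e^{L_\tau}=a\neq 0$, it is not tangent to the simplex, so the Hessian is positive definite on all feasible directions and the loss is strictly convex; combined with the computed stationary point this gives the unique minimizer. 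Finally, the ``soft actor-critic'' identification carries over trivially: maximizing $\E_{\tau\sim\pi}[G^{\star}(\tau)]+\tau_{0}\mathcal H(\pi)$ over the trajectory simplex via a Lagrange multiplier gives $\pi(\tau)\propto\exp(G^{\star}(\tau)/\tau_0)$, which matches \eqref{eq:gibbs-traj} when $\tau_0=\alpha/\beta$. The only genuine subtlety — and the step I would flag as the main obstacle — is that the ``trajectory-tabular'' parameterization treats each $L_\tau$ as a free coordinate, whereas a realizable policy factorizes as $\pi_\theta(\tau)=\prod_t\pi_\theta(a_t\mid s_t)$, so one should either argue that the unconstrained minimizer happens to be realizable (it need not be in a stochastic-transition MDP, since $G^{\star}$ need not decompose additively in a way compatible with an autoregressive product) or restrict attention, as the theorem statement implicitly does, to the idealized tabular-over-trajectories setting and note the connection to the state-wise result separately; I would make this modeling assumption explicit rather than let it hide in the phrase ``trajectory-tabular.''
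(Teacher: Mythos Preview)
Your proposal is correct and follows essentially the same route the paper indicates: the paper's own ``proof'' of Theorem~\ref{thm:traj-opt} is simply the remark that it is identical to the proof of Theorem~\ref{thm:pref-sat} with actions replaced by trajectories and $Q^{\star}_a$ replaced by $G^{\star}(\tau)$, and that is precisely what you carry out---same diagonal/off-diagonal split, same gradient formula, same stationarity-and-antisymmetry argument leading to the Gibbs form, same Laplacian Hessian for strict convexity, and the same vanishing KKT multiplier. The modeling subtlety you flag (trajectory-tabular $L_\tau$ versus an autoregressive factorization $\prod_t\pi_\theta(a_t\mid s_t)$) is exactly the point the paper treats separately in Section~\ref{sec:state-vs-traj}, so your instinct to isolate it rather than fold it into the theorem matches the paper's organization.
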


\subsection{Connection between the State-wise and Trajectory-wise Optima}
\label{sec:state-vs-traj}

The soft value function satisfies
\begin{equation}
  V^{\star}(s)
  \;=\;
  \lambda\,
  \log\!\sum\nolimits_{a\in\mathcal A}
        \exp\!\Bigl(\tfrac{1}{\lambda}\,Q^{\star}(s,a)\Bigr),
\label{eq:softV}
\end{equation}
while soft Bellman consistency gives, for every
state–action pair \((s,a)\),
\begin{equation}
  Q^{\star}(s,a)
  = r(s,a) 
    + \gamma\,\mathbb E_{s'\sim P(\cdot\mid s,a)}V^{\star}(s')
    + \lambda\,\mathcal H\!\bigl(\pi^{\star}(\,\cdot\!\mid s)\bigr).
\label{eq:softBellman}
\end{equation}
See \cite{haarnoja2017soft} for the proofs.
considering $\lambda = \alpha/\beta $ for every state $s$,
\begin{equation}
  \sum_{a\in\mathcal A}
     \exp\!\bigl(\tfrac{\beta}{\alpha}Q^{\star}(s,a)\bigr)
  \;=\;
  \exp\!\bigl(\tfrac{\beta}{\alpha}V^{\star}(s)\bigr).
  \label{eq:lamdaQV}
\end{equation}

Now, consider a trajectory
\(
  \tau=(s_0,a_0,\dots,s_{H-1},a_{H-1})
\).
Multiplying the optimal policy in Theorem \ref{thm:pref-sat}, along~\(\tau\) and using Equation \eqref{eq:lamdaQV} gives
\begin{align}
  \pi_{\mathrm{traj}}(\tau)& =\prod_{t=0}^{H-1}\pi_{\mathrm{st}}(a_t\mid s_t)
  \;\\ & = \;
  \exp\!\Bigl(
        \tfrac{\beta}{\alpha}
        \sum_{t=0}^{H-1}
          [\,Q^{\star}(s_t,a_t)-V^{\star}(s_t)]
      \Bigr),
\label{eq:chain}
\end{align}
where $\pi_{\mathrm{st}}$ is the optimal policy in \eqref{eq:pi-minimizer}.
From \eqref{eq:softBellman},
\(
  Q^{\star}(s_t,a_t)-V^{\star}(s_t)
  = r_t+ \lambda\mathcal H(\pi^{\star}(\cdot\mid s_t))
    -\gamma V^{\star}(s_{t+1})
\).
Summing this over \(t=0{:}H-1\) cancels the
\(V^{\star}\)-terms (telescoping), leaving
\begin{equation}
  \sum_{t=0}^{H-1}
    [\,Q^{\star}(s_t,a_t)-V^{\star}(s_t)]
  \;=\;
  G^{\star}(\tau).
\label{eq:telescope}
\end{equation}
Insert \eqref{eq:telescope} into \eqref{eq:chain}:
\begin{equation}
  \pi_{\mathrm{traj}}(\tau)=\prod_{t=0}^{H-1}\pi_{\mathrm{st}}(a_t\mid s_t)
  \;=\;
  \exp\!\bigl(\tfrac{\beta}{\alpha}G^{\star}(\tau)\bigr),
\label{eq:propto}
\end{equation}
Which is the nominator in \eqref{eq:gibbs-traj}. Equation~\eqref{eq:propto} shows that the joint likelihood assigned to \(\tau\) by the product of the per-state optimizers is proportional to
the optimal policy in \eqref{eq:gibbs-traj}.

\section{Experiments }
\label{apdx:hyperparameters}
Walker is a planar biped with four actuated joints that must walk without tipping over; Hopper is a one-leg, three-joint robot that learns to hop forward; Humanoid is a 17-joint 3D figure that must walk quickly while remaining upright; Swimmer is a three-link snake that propels itself through a viscous medium; Inverted Pendulum tasks a cart with balancing an upright pole; and MountainCar Continuous challenges a car trapped between two hills to climb the right hill by building momentum.

For GridWorld game, the grid is 5*5, where the agent starts at position 2*2. All methods use a tabular softmax policy parameterization, where each state-action pair has a corresponding logit parameter. For the reward model in RM+PPO, we use a simple tabular representation that assigns a value to each state-action pair. All methods are trained for the same number of environment interactions to ensure fair comparison. We use Adam optimizer with learning rate $3 \times 10^{-2}$ across all methods. 

Below we illustrate the results for more complex environments and different numbers of $M$ mentioned in Section \ref{sec:experiments}. We run the algorithms for 5 different seeds $\{3, 1, 14, 4, 50\}$.

\begin{figure*}[t]
    \centering
        \centering
        \includegraphics[width=.8\textwidth]{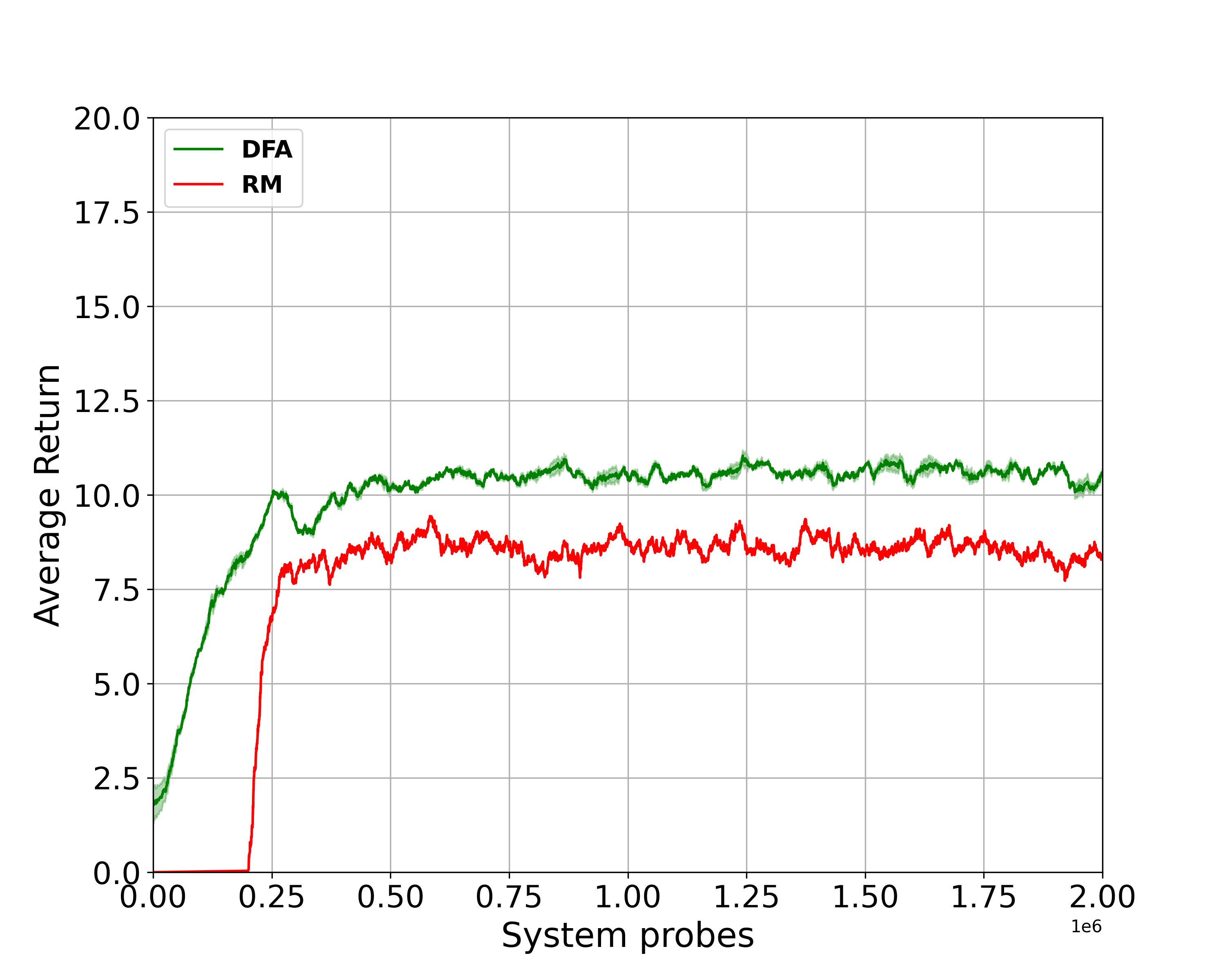}
    \caption{GridWorld results with size 10*10. $(M=500)$}
    \label{fig:grid10}
\end{figure*}

\begin{figure*}[t]
    \centering
        \centering
        \includegraphics[width=0.8\textwidth]{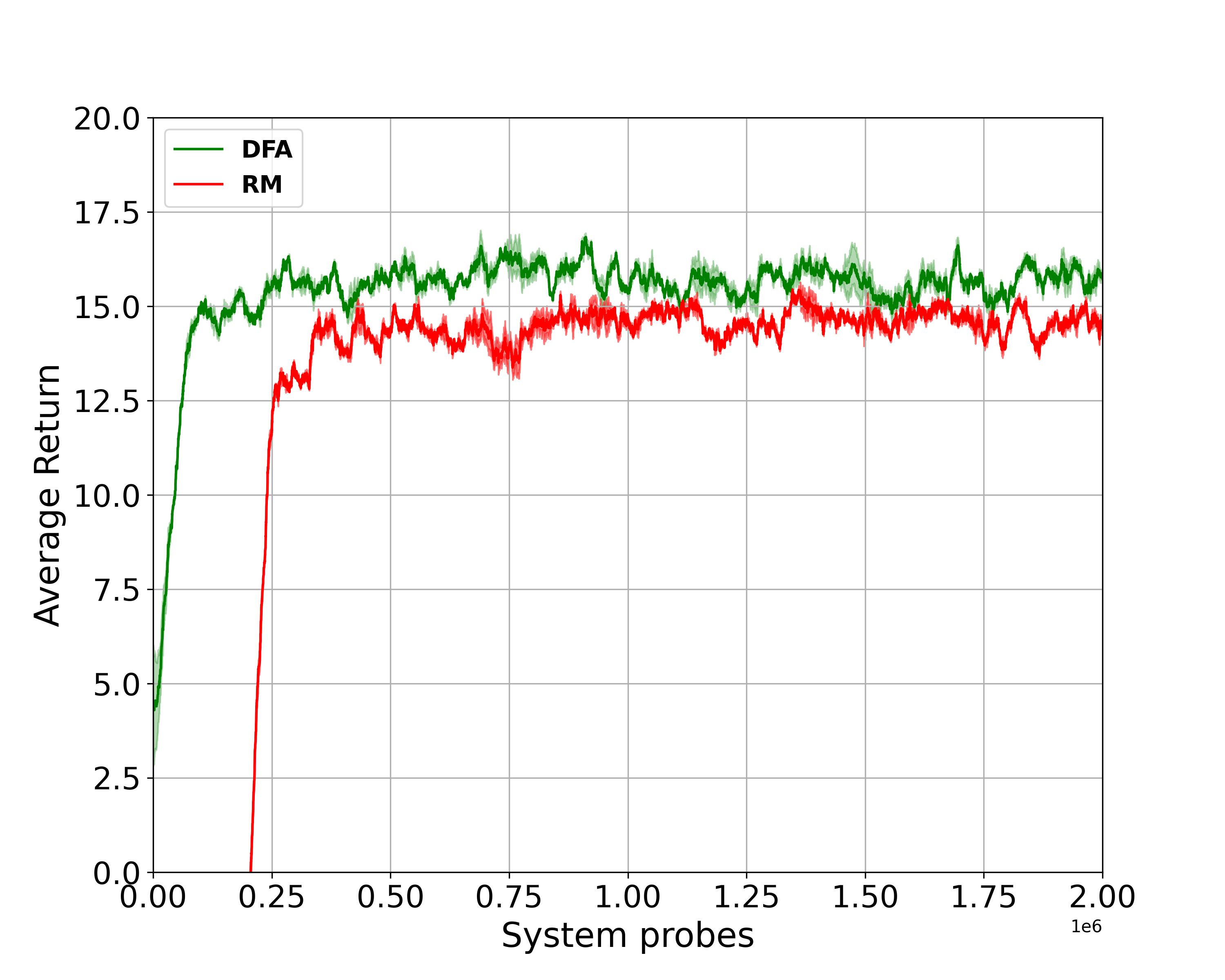}
    \caption{GridWorld results with size 20*20$(M=100)$.}
    \label{fig:grid20}
\end{figure*}

\begin{figure*}[t]
    \centering
        \centering
        \includegraphics[width=0.8\textwidth]{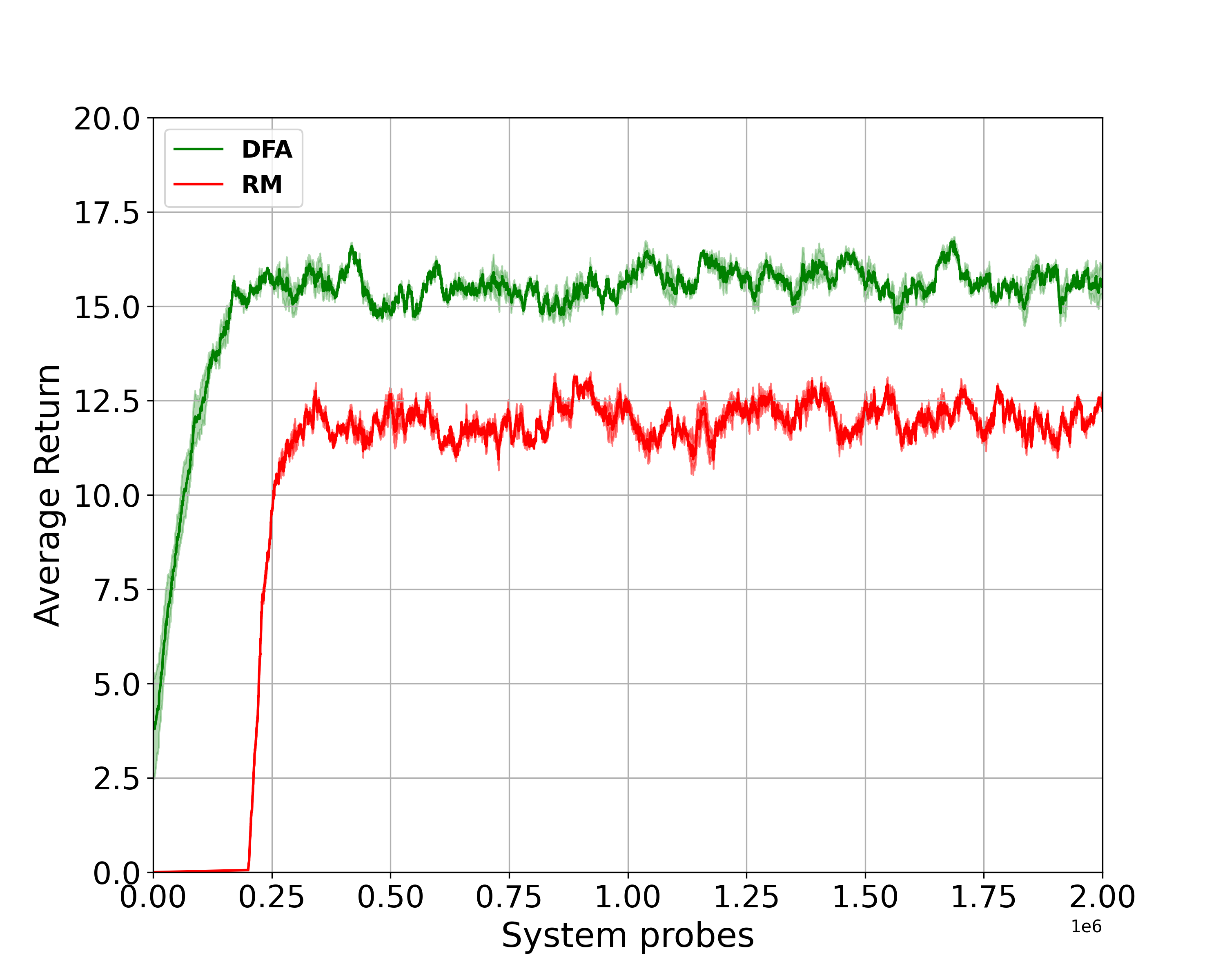}
    \caption{GridWorld results with size $20 \times 20 (M=1)$.}
    \label{fig:grid20-m1}
\end{figure*}

We utilized a Linux server with Intel Xeon CPU E5-2680 v3 (24 cores) operating at 2.50GHz with 377 GB DDR4 of memory and Nvidia Titan X Pascal GPU. The computation was distributed over 48 threads to ensure a relatively efficient run time. In our control-task experiments, DFA required more wall-clock time than SAC. For example, in the Pendulum, running 10 million system probes took 8 hours (on average) with DFA compared to 6.5 hours with SAC. In the Swimmer environment, SAC completed in 8 hours, while DFA took 9 hours. Although DFA generally requires more wall-clock time per step, in some environments (e.g., MountainCar, Swimmer) it converges in fewer steps. As a result, the increased per-step runtime does not significantly impact its overall efficiency.

For hyperparameter tuning, we performed a grid search, systematically exploring a predefined range of values for each parameter. In the following tables, we provide the fine-tuned parameters for each algorithm and method. Batch sizes are considered the same for all algorithms. The discount factor is also set to $0.99$ for all the runs.

\begin{table*}[htbp]
\centering

\begin{tabular}{lll}
\textbf{Algorithm} & \textbf{Hyper-parameter} & \textbf{Value} \\
\midrule
\multirow{7}{*}{ZPG}%
 & $T$ (iterations)                  & 1000000 \\
 & $N$ (pairs / iter)                & 1 - 10 \\
 & $M$ (votes / query)               & 1000 \\
 & $\mu$ (perturbation radius)       & 0.1 \\
 & $\alpha$ (learning-rate)          & 0.05 \\
 & $\text{trim}$ (prob.\ clip)       & $10^{-2}$ \\
\midrule
\multirow{7}{*}{RM-PPO}%
 & $\text{traj\_pairs (pretaining)}$              & 5000 \\
 & $\text{ppo\_iters}$               & 1000 \\
 & $\beta_{\mathrm{KL}}$             & 0.1 \\
 & $\gamma$ (discount)               & 1.0 \\
 & $\lambda$ (GAE)                   & 0.95 \\
\midrule
\multirow{6}{*}{DFA (on-policy)}%
\\ \\
 & $\alpha$ (temperature)             & $1\times10^{-3} - 1\times10^{-6}$ \\
 & $N_{\text{pairs/iter}}$                & 1 \\
 & $\text{iters}$                    & 100000 \\
\midrule
\multirow{6}{*}{Oracle-PPO}%
 & $\text{ppo\_iters}$               & 100000 \\
 & $\beta_{\mathrm{KL}}$             & 0.1 \\
 & $\gamma$ (discount)               & 1.0 \\
 & $\lambda$ (GAE)                   & 0.95 \\
\bottomrule
\end{tabular}

\caption{Default hyper-parameters for all algorithms used in the $5\times5$ GridWorld experiments}
\end{table*}

\begin{table*}[htbp]
\centering

\small
\setlength{\tabcolsep}{6pt}
\hspace{-2cm}
\begin{tabular}{llcccccc}
 \textbf{Alg.} & \textbf{Hyper-parameter} &
\textbf{Walker2d} & \textbf{Hopper} & \textbf{Swimmer} &
\textbf{Humanoid} & \textbf{Mountain\-CarC} & \textbf{Pendulum} \\ \midrule
\multirow{10}{*}{SAC}
 & Hidden layer size           & 64 & 64 & 64 & 64 & 64 & 64 \\
 & Policy learning-rate     & $1\times10^{-3}$ & $1\times10^{-3}$ &
   $1\times10^{-3}$ & $1\times10^{-3}$ & $1\times10^{-3}$ & $1\times10^{-3}$ \\
 & $Q$ learning-rate         & $1\times10^{-3}$ & $1\times10^{-3}$ &
   $1\times10^{-3}$ & $1\times10^{-3}$ & $1\times10^{-3}$ & $1\times10^{-3}$ \\
 & Batch size                         & 256 & 256 & 256 & 256 & 256 & 256 \\
 & Replay-buffer capacity             & 20\,000 & 20\,000 & 20\,000 & 20\,000 &
   20\,000 & 20\,000 \\
 & Entropy temperature $\lambda$       & 0.1 & 0.2 & 0.01 & 0.01 & 0.1 & 0.2 \\
 & Discount factor $\gamma$           & 0.99 & 0.99 & 0.99 & 0.99 & 0.99 & 0.99 \\
 & Soft-update coefficient $\tau$     & 0.1 & 0.005 & 0.1 & 0.1 & 0.01 & 0.005 \\
 & \# parallel envs $N_{\text{env}}$  & 32 & 32 & 32 & 32 & 32 & 32 \\
 & Training episodes                  & 50\,000 & 50\,000 & 50\,000 & 50\,000 &
   50\,000 & 50\,000 \\ \midrule
\multirow{11}{*}{DFA}
 & Hidden layer size           & 64 & 64 & 64 & 64 & 64 & 64 \\
 & Policy learning-rate     & $1\times10^{-3}$ & $1\times10^{-3}$ &
   $1\times10^{-3}$ & $1\times10^{-3}$ & $1\times10^{-3}$ & $1\times10^{-3}$ \\
 & $Q$ learning-rate         & $1\times10^{-3}$ & $1\times10^{-3}$ &
   $1\times10^{-3}$ & $1\times10^{-3}$ & $1\times10^{-3}$ & $1\times10^{-3}$ \\
 & Batch size                         & 256 & 256 & 256 & 256 & 256 & 256 \\
 & Replay-buffer capacity             & 20\,000 & 20\,000 & 20\,000 & 20\,000 &
   20\,000 & 20\,000 \\
 & Entropy temperature $\lambda$       & 0.01 & 0.1 & 0.01 & 0.01 & 0.01 & 0.01 \\
 & Temperature $\alpha$            & 0.2 & 0.2 & 0.3 & 0.2 & 0.4 & 0.2 \\
 & Discount factor $\gamma$           & 0.99 & 0.99 & 0.99 & 0.99 & 0.99 & 0.99 \\
 & Soft-update coefficient $\tau$     & 0.1 & 0.005 & 0.1 & 0.1 & 0.01 & 0.005 \\
 & \# parallel envs $N_{\text{env}}$  & 32 & 32 & 32 & 32 & 32 & 32 \\
 & Training episodes                  & 50\,000 & 50\,000 & 50\,000 & 50\,000 &
   50\,000 & 50\,000 \\
\bottomrule
\end{tabular}

\caption{Hyper-parameters for SAC and DFA across all evaluated environments}\end{table*}

\label{apdx:broader-impact}

DFA aims to make reinforcement learning from human feedback more sample-efficient by blending numeric rewards with pairwise preferences.  
Positive impacts include lowering annotation costs, enabling faster prototyping of assistive robots, and providing a simple baseline for preference-centric alignment research.  
However, the method also amplifies whatever biases or inconsistencies are present in the collected preferences: if early $Q$ estimates or human labels encode unfair or unsafe behavior, DFA may reinforce those patterns more quickly than reward-only training.  
Because DFA can learn from very small amounts of feedback, malicious or accidental injection of adversarial comparisons could steer policies toward harmful objectives—especially in safety-critical domains such as autonomous driving or content recommendation.  
The work uses only simulated environments and involves no personal data; nevertheless, broader deployment should respect fairness guidelines and, when real users provide feedback, comply with relevant privacy regulations.

\end{document}